\documentclass[
  aps,
  prresearch,
  reprint,
  amsmath,
  amssymb,
  superscriptaddress,
  nofootinbib,
  longbibliography
]{revtex4-2}
\usepackage{etex}
\usepackage{amsfonts,amsmath,amsthm,mathtools}
\usepackage{graphicx}
\usepackage[usenames,x11names,dvipsnames,svgnames]{xcolor}
\usepackage{dsfont}
\usepackage{dcolumn}
\usepackage{hyperref}
\usepackage{booktabs}
\usepackage{array}
\usepackage{multirow}
\usepackage{tikz}
\usepackage{pgfplots}
\pgfplotsset{compat=1.18}
\makeatletter
\providecommand{\href@noop}[2]{#2}

\makeatother
\hypersetup{
  colorlinks=true,
  linkcolor=blue,
  citecolor=blue, 
  urlcolor=blue
}

\newcommand{\mltlne}[2][\EQSP]{\begingroup\setlength\abovedisplayskip{#1}\setlength\belowdisplayskip{#1}\begin{equation}\begin{multlined} #2 \end{multlined}\end{equation}\endgroup\noindent}
\newcommand{\cgather}[2][\EQSP]{\begingroup\setlength\abovedisplayskip{#1}\setlength\belowdisplayskip{#1}\begin{gather} #2 \end{gather}\endgroup\noindent}
\newcommand{\cgathers}[2][\EQSP]{\begingroup\setlength\abovedisplayskip{#1}\setlength\belowdisplayskip{#1}\begin{gather*} #2 \end{gather*}\endgroup\noindent}
\newcommand{\calign}[2][\EQSP]{\begingroup\setlength\abovedisplayskip{#1}\setlength\belowdisplayskip{#1}\begin{align} #2 \end{align}\endgroup\noindent}
\newcommand{\caligns}[2][\EQSP]{\begingroup\setlength\abovedisplayskip{#1}\setlength\belowdisplayskip{#1}\begin{align*} #2 \end{align*}\endgroup\noindent}
 \newcommand{\tc}[2]{\begin{minipage}[t]{#1}\raggedright #2\end{minipage}} 
\newcolumntype{L}[1]{>{\raggedright\arraybackslash}p{#1}}
\newcolumntype{C}[1]{>{\centering\arraybackslash}p{#1}}
\newcolumntype{R}[1]{>{\raggedleft\arraybackslash}p{#1}}
\newtheorem{theorem}{Theorem}
\newtheorem{proposition}{Proposition}
\newtheorem{definition}{Definition}
\newtheorem{lemma}{Lemma}
\newtheorem{corollary}{Corollary}

\newtheorem{assumption}{Assumption}
\newtheorem{hypothesis}{Hypothesis}
\newcommand{\KL}{D_{\mathrm{KL}}}
\newcommand{\TV}{\mathrm{TV}}
\newcommand{\Prob}{\mathbb{P}}
\newcommand{\Exp}{\mathbb{E}}
\newcommand{\calI}{\mathcal{I}}

\newcommand{\dd}{\,d}

\begin{document}
\title{Thermodynamic Measure of Intelligence}

\author{Ishanu Chattopadhyay}
\email{ishanu\_ch@uky.edu}
\affiliation{Institute for Biomedical Informatics, University of Kentucky, Lexington, Kentucky, USA}
\affiliation{Department of Computer Science, University of Kentucky, Lexington, Kentucky, USA}

\date{\today}

\begin{abstract}
Can intelligence be measured? We propose that intelligence can be defined as the lawful amplification of rare but valid futures: a system increases the probability of outcomes that would be unlikely under passive dynamics but remain admissible under the constraints of the domain. We start with the premise  that an intelligent system must model the world and its own place within it. Because the system is part of the world it models, this leads naturally to recursive self-simulation: the system represents futures in which its own actions are part of the trajectory. Our central results give a necessity statement and a conditional near-sufficiency statement connecting this architecture to a precise thermodynamic measure of lawful amplification of rare-valid futures: high rare-valid lift is impossible unless the internal simulation identifies rare-valid futures with high fidelity; conversely, when rare-valid fidelity is high and the simulation contains an effective policy, the achievable lift approaches the actuation-limited optimum. Thus recursive self-simulation is not merely a plausible feature of intelligence but, under the stated assumptions, is necessary and nearly sufficient for high thermodynamic intelligence. The resulting framework makes intelligence measurable on a universal scale, from passive matter and feedback controllers, large language models, and  humans as text generators to Maxwell-demon-like information engines.
\end{abstract}

\maketitle

\allowdisplaybreaks
\section{Introduction}
\label{sec:introduction}

Can intelligence be framed as a measurable physical quantity? We start from the observation that any system perceived to be intelligent models the world with itself inside it, simulates possible futures conditioned on its own actions, and uses that internal model to make some futures more likely than they would be under passive dynamics. This suggests that a relevant measurable quantity is \emph{rare-valid lift}: the increase in probability assigned to futures that are unlikely under a passive baseline but remain valid under the constraints of the domain. Our central result is that high rare-valid lift cannot be obtained merely from randomness or strong actuation. Under bounded amplification, it requires high-fidelity self-simulation: the system's internal model must identify rare-valid futures accurately enough to target them.

\begin{figure*}[!t]

  \IfFileExists{figtom.pdf}{%
  \includegraphics[width=.85\textwidth]{figtom}%
  }{%
  \fbox{\parbox{.82\textwidth}{\centering Missing figure file \texttt{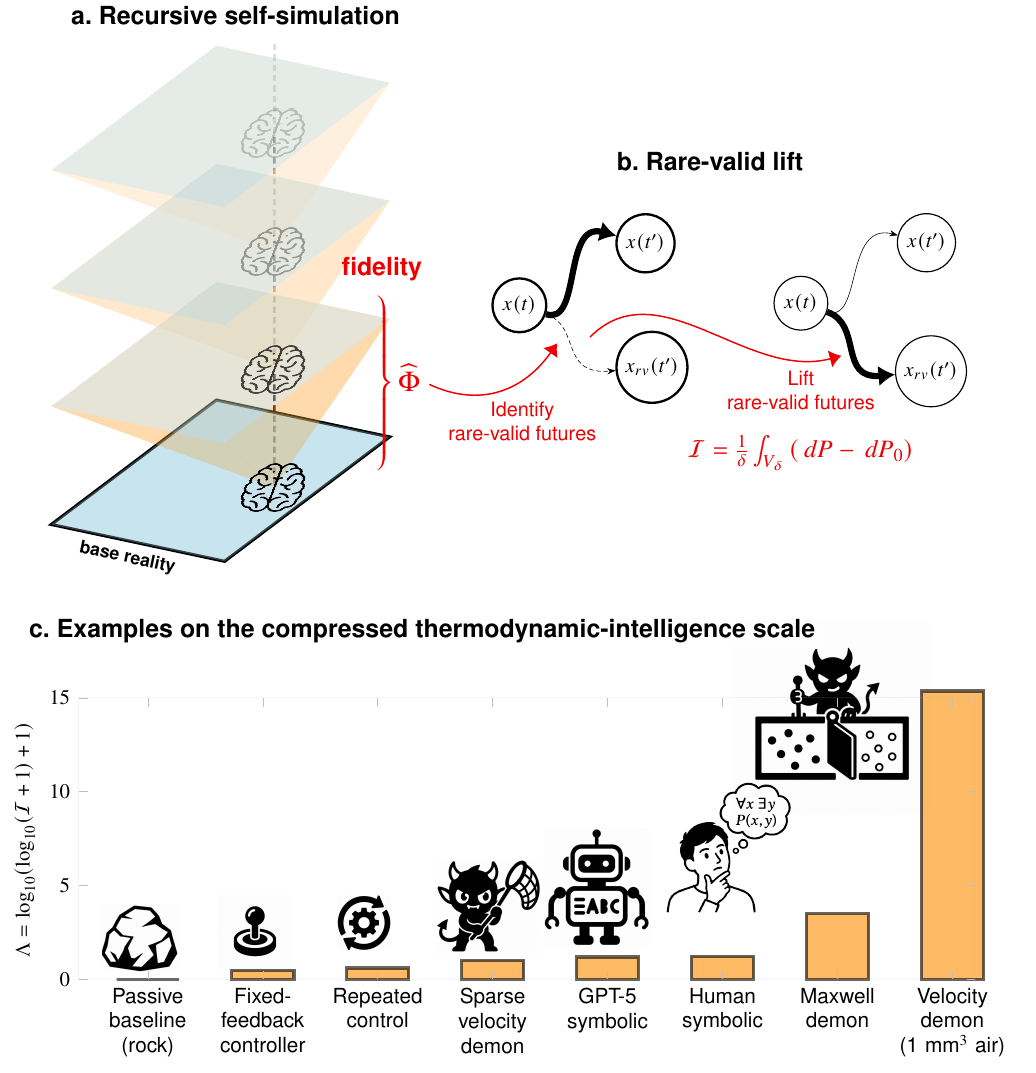}. Upload it with the arXiv source bundle.}}%
  }
\vspace{-5pt}
  
 \caption{
Conceptual summary. (a) Recursive self-simulation: a system represents the world at one level together with models of its own future states and actions at higher simulated levels. The rare-valid fidelity ($\widehat\Phi$) measures how accurately the simulation identifies targetable rare-valid futures. (b) Thermodynamic intelligence: relative to a passive trajectory law ($P_0$), an induced law ($P$) shifts probability mass toward rare-valid trajectories ($V_\delta$), producing rare-valid lift ($I_\delta=\delta^{-1}\int_{V_\delta}(dP-dP_0)$). (c) Representative systems on the compressed scale ($\Lambda=\log_{10}(\log_{10}(I+1)+1)$). The plotted examples are finite-resolution calibrations of probability lift, with symbolic and demon entries interpreted under the assumptions stated in the text.
}\label{figtom}
\vspace{-10pt}

\end{figure*}  

Standard definitions of intelligence emphasize behavior: imitation or conversational indistinguishability~\citep{turing1950computing}, learning, reasoning, planning, generalization, compression, reward maximization, or task success~\citep{legg2007universal,russellnorvig2021aima,chollet2019measure}. These criteria are useful, but they do not by themselves identify a substrate-independent operation common to brains, large language models, microbial communities, immune repertoires, controllers, and idealized information engines such as Maxwell's demons. All of these systems transform information into action. We therefore ask a different question: what does an intelligent system do to the likelihood of possible futures?

This question is related to, but distinct from, existing task-facing accounts of  intelligence. Legg--Hutter intelligence defines an agent's intelligence by its expected reward over a universal distribution of computable environments~\citep{legg2007universal}. Chollet's ARC framework instead emphasizes skill-acquisition efficiency: the ability to infer abstract rules and generalize from sparse experience under human-like priors~\citep{chollet2019measure}. These accounts evaluate performance across environments, benchmarks, or problem classes. Our  framework is path-facing. We  ask what physical or probabilistic operation underlies such performance once a level of description, baseline law, validity criterion, and observational resolution are fixed. Reward maximization, benchmark generalization, theorem proving, symbolic problem solving, and biological adaptation can then be treated as special cases.

We begin with recursive self-simulation. A system acts intelligently, in the present sense, when it carries a model of the world that includes itself as a causal object. Such a model can represent possible futures, including the system's own interventions, the observations those interventions may produce, and the way its own information state may later change. Minsky anticipated this point in his account of internal models: advanced problem solving requires a system to represent its own goals, resources, and problem-solving activity, and self-understanding can involve models of models of oneself~\citep{minsky1965matter,minsky1986society,minsky2006emotion}. Related ideas appear in work on self-reference and strange loops~\citep{godel1931undecidable,hofstadter1979geb,hofstadter2007strange}.

To measure the capabilities enabled by recursive self-simulation, we use path laws. Passive dynamics induce a baseline distribution \(P_0\) over trajectories. A controlled or agent-like system induces another law \(P\). A self-simulation becomes observable through the way it changes these probabilities. Intelligence, viewed this way, is lawful trajectory reweighting: some futures become more likely, others less so, and the change must respect the thermodynamic accounting required by measurement, memory, computation, control, and erasure.

Not all trajectory manipulations are the same. The most informative changes occur in the tail of the trajectory distribution. Moving probability among futures already common under \(P_0\) may reflect stabilization or regulation, but it does not strongly test whether the system can reach beyond passive dynamics. Rare futures probe that ability because they would otherwise remain effectively unrealized. Rarity alone, however, is not a measure of intelligence; random noise also produces improbable events. The futures must remain valid; and hence we focus on rare-valid futures: trajectories that have low probability under the passive law but remain admissible under the constraints of the domain. Rarity supplies counterfactual difficulty; validity, interpreted as physical realizability, biological viability, semantic coherence, executable correctness, or functional success, prevents the measure from rewarding noise.

To make this notion precise and computable, we consider the thermodynamics of system trajectories, and we define  thermodynamic intelligence as rare-valid probability lift: the fractional increase, under the induced law \(P\), in the probability of an exceedingly rare but valid set. The definition turns the quantification of intelligence into a question about path measures. The thermodynamic machinery needed for this analysis is well developed~\citep{landauer1961irreversibility,bennett1982thermodynamics,sagawa2010generalized,parrondo2015thermodynamics,mandal2012work}, including non-equilibrium fluctuation theorems, which quantify the relative likelihood of entropy-producing and entropy-reducing trajectories under passive dynamics~\citep{evans1993probability,jarzynski1997nonequilibrium,crooks1999entropy,seifert2005entropy}. The symbolic case uses the complementary information-theoretic language of entropy rate and coding~\citep{shannon1948mathematical,coverthomas2006elements}. We illustrate the framework through examples spanning passive systems with zero lift, simple controllers with modest amplification,  symbolic generators, including GPT-5 and human text, to Maxwell-demon-like information engines.
Maxwell's demon provides  the canonical historical case: a hypothetical microscopic observer using information about particle states to sort thermal fluctuations and create an apparent local entropy reduction. Here the demon serves as an idealized high-lift limit, where near-perfect microstate simulation, rare-valid identification, and actuation produce extreme rare-valid trajectory amplification before implementation costs are paid. Conversely, the symbolic examples show how the same formalism can be applied at a finite linguistic resolution, once the baseline ensemble, validity criterion, sequence length, and generator-induced probability shift are specified.

\section{Recursive Self-Simulation}
\label{sec:selfmodeling}

To act with intelligence, a system needs a deep model of its world, with itself in it. Minsky anticipated this point in his account of internal models: advanced problem solving requires a system to represent not only the external situation, but also its own goals, resources, and problem-solving activity, and self-understanding can involve models of models of oneself~\citep{minsky1965matter,minsky1986society,minsky2006emotion}. We formalize this self-in-world requirement as an embedded representation hierarchy. Let \(B\) denote an agent-like system embedded in an environment \(E\), and let \(U=B\cup E\). At a minimal level, the system maintains an internal representation of its observable world,
\cgather{
    r_B^{(0)}=r_B^{(0)}(U).
}
Because \(B\in U\), a sufficiently general model of the local universe must also represent the system itself: its state, memory, uncertainty, actions, and possible future updates, inducing a recursive hierarchy:
\calign{
    r_B^{(1)} &= r_B^{(1)}(r_B^{(0)}),\\
    r_B^{(2)} &= r_B^{(2)}(r_B^{(1)}),\\
    &\vdots \nonumber\\
    r_B^{(k)} &= r_B^{(k)}(r_B^{(k-1)}).
}
The hierarchy encodes predictions about the world, predictions about the system's own future actions, and predictions about how its information state may change; recursive self-simulation is therefore a finite self-referential loop. If the environment contains other agents \(B_j\), the same idea extends to nested representations:
\cgather{
    r_B^{(k)}\!\left(r_{B_j}^{(\ell)}\right),
    \qquad j\neq B,\quad k,\ell\geq0.
}%
Social and biological environments can therefore generate interacting recursive models. This is the thermodynamic analogue of theory-of-mind style modeling: an agent's future depends partly on what it predicts other agents will perceive, infer, and do~\citep{premack1978chimpanzee,dennett1987intentional}.

Recursive self-simulation becomes operational when a system evaluates consequences of its own future actions. Let \(h_t\) denote the history available at time \(t\), let \(\mathcal A_t\) denote the available action set, and let \(\Gamma_{t:T}\) denote a future trajectory segment. Suppose an agent evaluates a trajectory functional \(G\) under its \(k\)-level internal model and selects
\cgather{
    a_t^\star
    \in
    \arg\max_{a\in\mathcal A_t}
    \Exp_{r_B^{(k)}}\!\left[
    G(\Gamma_{t:T})\mid h_t,a
    \right].
    \label{eq:recursive_action_choice}
}%
Equivalently, the agent may implement a policy \(\pi_t(\cdot\mid h_t)\) concentrated near such maximizing actions. To evaluate this expectation, the model must represent the future environment, the agent's possible actions, and the agent's own future information state. Our construction is distinct from predictive-processing and active-inference views of perception and action~\citep{friston2010free}; our measured quantity is rare-valid probability lift relative to \(P_0\), not free energy itself.

The hierarchy of recursive simulation determines which futures the system can identify, evaluate, and target. To connect this architecture to a measurable quantity, we now introduce the level-relative rare-valid lift. Let \(\mathcal L_0\) denote the base reality, or physical level, under consideration. Throughout the paper, \(\calI\) denotes a fractional probability lift of a rare-valid set relative to a passive baseline. At level \(k\), this lift compares the probability assigned to a rare-valid event under an induced or simulated law with its probability under the corresponding passive law. Realized intelligence is the lift actually induced in the level-\(k\) path law. Intelligence potential is the largest such lift available inside a level-\((k+1)\) simulation of level \(k\). Section~\ref{sec:definition} gives the corresponding trajectory-space definition and identifies it as thermodynamic intelligence.

For \(k\geq0\), let \(\Omega_k\) be the trajectory space at level \(k\), let \(\mathcal F_k\) be its \(\sigma\)-algebra, let \(P_{0,k}\) be the passive or baseline law, and let \(\eta_k\) be the observational resolution. Let \(V_{\delta,k}\subseteq\Omega_k\) be a measurable rare-valid event at level \(k\) with baseline mass
\cgather{
    \delta_k \triangleq P_{0,k}(V_{\delta,k})>0.
    \label{eq:level_delta_k}
}
When the target mass is fixed or clear from context, we write \(\delta\) rather than \(\delta_k\). Define
\cgather{
    \mathcal L_k\triangleq (\Omega_k,\mathcal F_k,P_{0,k},V_{\delta,k},\eta_k).
}
For any level-\(k\) path law \(Q_k\) satisfying \(V_{\delta,k}\in\mathcal F_k\), define the level-\(k\) rare-valid lift
\mltlne{
    \calI_{\delta,k}(Q_k;P_{0,k},V_{\delta,k})
    \triangleq
    \frac{Q_k(V_{\delta,k})-P_{0,k}(V_{\delta,k})}
    {P_{0,k}(V_{\delta,k})}\\
    =
    \frac{Q_k(V_{\delta,k})-\delta_k}{\delta_k}.
    \label{eq:level_relative_lift}
}%
The hierarchy is recursive in the upward direction: \(\mathcal L_{k+1}\) is a simulation or model of \(\mathcal L_k\). We write hatted quantities for the representation of level \(k\) inside level \(k+1\):
\cgathers{
    \widehat{\mathcal L}_{k+1\to k}
    =
    (\widehat\Omega_{k+1\to k},
    \widehat{\mathcal F}_{k+1\to k},
    \widehat P_{0,k+1\to k},
    \widehat V_{\delta,k+1\to k},
    \widehat\eta_{k+1\to k}).
}
Here \(\widehat V_{\delta,k+1\to k}\) is the level-\((k+1)\) representation of the level-\(k\) rare-valid set, with simulated baseline mass
\cgather{
    \widehat\delta_{k+1\to k}
    \triangleq 
    \widehat P_{0,k+1\to k}(\widehat V_{\delta,k+1\to k})>0.
    \label{eq:hat_delta}
}
When no ambiguity is possible, write \(\widehat P_0\), \(\widehat P_\pi\), \(\widehat V_\delta\), and \(\widehat\delta\) for the corresponding level-\((k+1\to k)\) quantities. The realized rare-valid lift of \(B\) at level \(k\), when the system induces the actual level-\(k\) path law \(P_{B,k}\), is
\mltlne{
    \calI^{\rm real}_{\delta,k}(B)
    \triangleq 
    \calI_{\delta,k}(P_{B,k};P_{0,k},V_{\delta,k})\\
    =
    \frac{P_{B,k}(V_{\delta,k})-P_{0,k}(V_{\delta,k})}
    {P_{0,k}(V_{\delta,k})}.
    \label{eq:realized_level_intelligence}
}
If \(P_{B,k}=P_{0,k}\), then \(\calI^{\rm real}_{\delta,k}(B)=0\) relative to that baseline. This does not say the system lacks intelligence; it says that intelligence is not realized as a path-law change at that level.

The intelligence potential for level \(k\) is computed inside the level-\((k+1)\) simulation of level \(k\). Let \(\widehat\Pi_{k+1\to k}\) be the simulated policy class. For \(\pi\in\widehat\Pi_{k+1\to k}\), define the simulated rare-valid lift
\cgather{
    \widehat{\calI}_{\delta}^{(k+1\to k)}(\pi)
    \triangleq
    \frac{
    \widehat P_{\pi,k+1\to k}(\widehat V_\delta)
    -
    \widehat\delta
    }
    {\widehat\delta}.
    \label{eq:potential_intelligence_policy}
}
The intelligence potential of \(B\) for level \(k\), as represented at level \(k+1\), is the supremal simulated rare-valid lift over the policy class available in that representation:
\cgather{
    \calI^{\rm pot}_{\delta,k}(B)
    \triangleq 
    \sup_{\pi\in\widehat\Pi_{k+1\to k}}
    \widehat{\calI}_{\delta}^{(k+1\to k)}(\pi).
    \label{eq:potential_intelligence_sup}
}
Thus actuation is not assumed at level \(k\) when potential is computed. The simulated action variables live in \(\mathcal L_{k+1}\); realization at level \(k\) is the separate question of whether a simulated policy can indeed be  implemented as an actual path-law change.

\paragraph{Rare-valid simulation fidelity}
\label{sec:rv_fidelity}

The relevant fidelity is not generic prediction accuracy. A model may predict common trajectories well while missing the rare-valid set, or be coarse in irrelevant coordinates while accurate on the rare-valid futures and actions that matter. We therefore define fidelity directly on the target set.

Let \(\widehat A_{k+1\to k}\subseteq\widehat\Omega_{k+1\to k}\) denote the set of trajectories identified by the level-\((k+1)\) simulation as targetable rare-valid futures for level \(k\). We define the level-specific rare-valid self-simulation fidelity
\cgather{
    \widehat\Phi_{k+1\to k}
    \triangleq 
    \frac{\widehat P_{0,k+1\to k}(\widehat A_{k+1\to k}\cap\widehat V_{\delta,k+1\to k})}{\widehat\delta_{k+1\to k}}.
    \label{eq:rv_fidelity}
}%
When the represented level is fixed, we write \(\widehat\Phi\) for \(\widehat\Phi_{k+1\to k}\). The set \(\widehat V_\delta\) is ``true'' only relative to the specified level-\(k\) description and its represented validity criterion. Thus \(\widehat\Phi=1\) means that, at the simulated baseline resolution, the targetable set covers the represented rare-valid set; \(\widehat\Phi=0\) means that the simulation misses it. The corresponding rare-valid self-simulation error is
\cgather{
    \widehat\varepsilon^{\rm RV}_{k+1\to k}
    \triangleq 
    1-\widehat\Phi.
    \label{eq:rv_error}
}%
Next we have our necessity result: under bounded amplification, high level-relative intelligence potential cannot be obtained from low rare-valid simulation fidelity.
\begin{theorem}[Rare-valid self-simulation fidelity is necessary]
\label{thm:rv_fidelity_necessity}
Work inside the level-\((k+1)\) simulation of level \(k\), and abbreviate
\(\widehat P_0=\widehat P_{0,k+1\to k}\),
\(\widehat P_\pi=\widehat P_{\pi,k+1\to k}\),
\(\widehat V_\delta=\widehat V_{\delta,k+1\to k}\),
\(\widehat A=\widehat A_{k+1\to k}\),
\(\widehat\delta=\widehat P_0(\widehat V_\delta)>0\), and let
\(\widehat\Phi\) denote the fidelity in Eq.~\eqref{eq:rv_fidelity}.
Assume \(\widehat P_\pi\ll \widehat P_0\). Suppose there exists \(\alpha_{\max}\geq1\) such that
\cgather{
    \frac{d\widehat P_\pi}{d\widehat P_0}(\omega)
    \leq
    \alpha_{\max}
    \quad
    \widehat P_0\text{-a.e. on }\widehat A\cap\widehat V_\delta,
    \label{eq:alpha_max_on_true_positive}
}
and
\cgather{
    \frac{d\widehat P_\pi}{d\widehat P_0}(\omega)
    \leq
    1
    \quad
    \widehat P_0\text{-a.e. on }\widehat V_\delta\setminus\widehat A.
    \label{eq:no_unidentified_amplification}
}
Then
\cgather{
    \widehat{\calI}_{\delta}^{(k+1\to k)}(\pi)
    \leq
    (\alpha_{\max}-1)\widehat\Phi.
    \label{eq:rv_necessity_bound}
}
And, if \(\alpha_{\max}>1\) and for some \(I_0>0\),
\(\widehat{\calI}_{\delta}^{(k+1\to k)}(\pi)\geq I_0\), then
\cgather{
    \widehat\Phi
    \geq
    \frac{I_0}{\alpha_{\max}-1}.
    \label{eq:rv_necessity_i0}
}
Thus high intelligence potential requires high rare-valid simulation fidelity relative to the available amplification budget. In particular, if \(I_0>\alpha_{\max}-1\), no policy satisfying the amplification bound can attain \(I_0\).
\end{theorem}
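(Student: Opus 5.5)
The approach is a direct Radon--Nikodym computation: split the rare-valid set along the identified set $\widehat A$ and bound the simulated mass on each piece. Write $f=d\widehat P_\pi/d\widehat P_0$, which exists because $\widehat P_\pi\ll\widehat P_0$. Then
\[
\widehat P_\pi(\widehat V_\delta)=\int_{\widehat A\cap\widehat V_\delta} f\,d\widehat P_0+\int_{\widehat V_\delta\setminus\widehat A} f\,d\widehat P_0 .
\]
On the first piece I apply Eq.~\eqref{eq:alpha_max_on_true_positive}, which bounds the integral by $\alpha_{\max}\widehat P_0(\widehat A\cap\widehat V_\delta)=\alpha_{\max}\widehat\Phi\,\widehat\delta$, using the definition of $\widehat\Phi$ in Eq.~\eqref{eq:rv_fidelity}. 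On the second piece I apply Eq.~\eqref{eq:no_unidentified_amplification}, which bounds it by $\widehat P_0(\widehat V_\delta\setminus\widehat A)=\widehat\delta-\widehat\Phi\,\widehat\delta=(1-\widehat\Phi)\widehat\delta$. Both identities use only additivity of $\widehat P_0$ on the disjoint decomposition of $\widehat V_\delta$.

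Adding the two bounds gives
\[
\widehat P_\pi(\widehat V_\delta)\le \widehat\delta\bigl(1+(\alpha_{\max}-1)\widehat\Phi\bigr).
\]
Subtracting $\widehat\delta$ and dividing by $\widehat\delta>0$ in Eq.~\eqref{eq:potential_intelligence_policy} yields Eq.~\eqref{eq:rv_necessity_bound}.

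For Eq.~\eqref{eq:rv_necessity_i0}, chain $I_0\le\widehat{\calI}_\delta^{(k+1\to k)}(\pi)\le(\alpha_{\max}-1)\widehat\Phi$ and divide by $\alpha_{\max}-1>0$. For the final remark, note that $\widehat\Phi\le1$, since $\widehat A\cap\widehat V_\delta\subseteq\widehat V_\delta$. Hence any admissible policy has lift at most $\alpha_{\max}-1$. If $I_0>\alpha_{\max}-1$, then Eq.~\eqref{eq:rv_necessity_i0} would force $\widehat\Phi>1$, a contradiction.

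There is no real obstacle here. The only care points are measurability: $\widehat A$ must be in $\widehat{\mathcal F}_{k+1\to k}$, which is implicit in the definition of $\widehat\Phi$. The a.e.\ bounds must also be used under $\widehat P_0$, which is exactly the measure we integrate against. The degenerate case $\alpha_{\max}=1$ is excluded from the second claim by hypothesis; in that case the first claim simply says the lift is nonpositive.
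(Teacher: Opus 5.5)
Your proposal is correct and follows essentially the same route as the paper's proof. Both split $\widehat V_\delta$ along $\widehat A$, bound the two pieces via the likelihood-ratio hypotheses and the definition of $\widehat\Phi$, and rearrange; your justification of the final remark via $\widehat\Phi\le 1$ (which also covers $\alpha_{\max}=1$) is a small addition the paper leaves implicit.
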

\noindent\emph{Proof.} See Appendix~\ref{app:proofs}.

The near-converse requires an implementation assumption: the simulation must contain a policy that amplifies the correctly identified rare-valid region.

\begin{theorem}[Near-sufficiency under effective simulated actuation]
\label{thm:rv_fidelity_sufficiency}
Use the notation of Theorem~\ref{thm:rv_fidelity_necessity}. Suppose there exists a simulated policy \(\pi\) and constants \(\alpha_{\min}>1\), \(0\leq\beta_{\min}\leq\alpha_{\min}\) such that
\cgather{
    \frac{d\widehat P_\pi}{d\widehat P_0}(\omega)
    \geq
    \alpha_{\min}
    \quad
    \widehat P_0\text{-a.e. on }\widehat A\cap\widehat V_\delta,
    \label{eq:alpha_min_on_true_positive}
}
and
\cgather{
    \frac{d\widehat P_\pi}{d\widehat P_0}(\omega)
    \geq
    \beta_{\min}
    \quad
    \widehat P_0\text{-a.e. on }\widehat V_\delta\setminus\widehat A.
    \label{eq:beta_min_on_missed_valid}
}
Then
\cgather{
    \widehat{\calI}_{\delta}^{(k+1\to k)}(\pi)
    \geq
    \alpha_{\min}\widehat\Phi
    +
    \beta_{\min}(1-\widehat\Phi)
    -1.
    \label{eq:rv_sufficiency_bound_general}
}
In particular, if \(0\leq\varepsilon\leq1\) and \(\widehat\Phi\geq1-\varepsilon\), then
\cgather{
    \widehat{\calI}_{\delta}^{(k+1\to k)}(\pi)
    \geq
    (\alpha_{\min}-1)
    -
    (\alpha_{\min}-\beta_{\min})\varepsilon.
    \label{eq:rv_sufficiency_bound_epsilon}
}
Therefore, as \(\widehat\Phi\to1\), effective simulated actuation drives the intelligence potential toward the actuation-limited value \(\alpha_{\min}-1\).
\end{theorem}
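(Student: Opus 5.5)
The plan is to mirror the necessity argument of Theorem~\ref{thm:rv_fidelity_necessity}, but with the pointwise lower bounds on the Radon--Nikodym derivative in place of the upper bounds. Since the hypotheses are stated only in terms of \(d\widehat P_\pi/d\widehat P_0\), we work under the absolute-continuity convention \(\widehat P_\pi\ll\widehat P_0\) of Theorem~\ref{thm:rv_fidelity_necessity}. In fact only the inequality \(\widehat P_\pi(E)\geq\int_E (d\widehat P_\pi/d\widehat P_0)\,d\widehat P_0\) is needed, and this holds in general via the Lebesgue decomposition, because the singular part is nonnegative. So even without absolute continuity the lower bound survives, which is a small robustness remark worth including.

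First, decompose the represented rare-valid set into the disjoint measurable pieces \(\widehat A\cap\widehat V_\delta\) and \(\widehat V_\delta\setminus\widehat A\). Then write
\[
\widehat P_\pi(\widehat V_\delta)\geq\int_{\widehat A\cap\widehat V_\delta}\frac{d\widehat P_\pi}{d\widehat P_0}\,d\widehat P_0+\int_{\widehat V_\delta\setminus\widehat A}\frac{d\widehat P_\pi}{d\widehat P_0}\,d\widehat P_0 .
\]
Applying the a.e.\ bounds \eqref{eq:alpha_min_on_true_positive} and \eqref{eq:beta_min_on_missed_valid} gives
\[
\widehat P_\pi(\widehat V_\delta)\geq\alpha_{\min}\widehat P_0(\widehat A\cap\widehat V_\delta)+\beta_{\min}\widehat P_0(\widehat V_\delta\setminus\widehat A).
\]
By the definition \eqref{eq:rv_fidelity}, the first mass equals \(\widehat\Phi\widehat\delta\). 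Additivity of \(\widehat P_0\) on \(\widehat V_\delta\) then gives the second mass as \((1-\widehat\Phi)\widehat\delta\). Dividing by \(\widehat\delta>0\) and subtracting \(1\), as in \eqref{eq:potential_intelligence_policy}, yields \eqref{eq:rv_sufficiency_bound_general}.

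For \eqref{eq:rv_sufficiency_bound_epsilon}, rewrite the right-hand side of \eqref{eq:rv_sufficiency_bound_general} as \((\alpha_{\min}-1)-(\alpha_{\min}-\beta_{\min})(1-\widehat\Phi)\). Because \(\alpha_{\min}-\beta_{\min}\geq0\), this expression is nondecreasing in \(\widehat\Phi\). The hypothesis \(\widehat\Phi\geq1-\varepsilon\), i.e.\ \(1-\widehat\Phi\leq\varepsilon\), therefore gives the stated bound. The limiting statement follows by letting \(\varepsilon\to0\).

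I expect no real obstacle here. The only points needing care are the measurability and disjointness of the decomposition, and the sign condition \(\beta_{\min}\leq\alpha_{\min}\), which is exactly what the monotonicity step uses. One interpretive caveat should be stated plainly: the final sentence, ``drives toward \(\alpha_{\min}-1\)'', is a lower bound only. It should be paired with Theorem~\ref{thm:rv_fidelity_necessity} to get matching behaviour when \(\alpha_{\min}=\alpha_{\max}\) and there is no amplification off \(\widehat A\).
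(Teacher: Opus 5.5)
Your proposal is correct and follows the paper's proof essentially step for step. You split $\widehat V_\delta$ into $\widehat A\cap\widehat V_\delta$ and $\widehat V_\delta\setminus\widehat A$, apply the a.e.\ lower bounds, substitute $\widehat\delta\widehat\Phi$ and $\widehat\delta(1-\widehat\Phi)$, and use $\alpha_{\min}\geq\beta_{\min}$ for monotonicity in $\widehat\Phi$. Your Lebesgue-decomposition remark is also valid: the lower bound survives without $\widehat P_\pi\ll\widehat P_0$ because the singular part is nonnegative, a small strengthening the paper leaves implicit.
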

\noindent\emph{Proof.} See Appendix~\ref{app:proofs}.

Theorems~\ref{thm:rv_fidelity_necessity} and~\ref{thm:rv_fidelity_sufficiency} are the formal bridge between recursive self-simulation and thermodynamic intelligence. Low rare-valid fidelity caps the achievable lift; high rare-valid fidelity, together with a policy that amplifies the correctly identified region, yields high lift. These statements concern intelligence potential for level \(k\) as computed in the level-\((k+1)\) simulation. Realized intelligence at level \(k\) additionally requires implementation as an actual level-\(k\) path-law change.

\section{Thermodynamic Intelligence}
\label{sec:definition}
The previous section described recursive self-simulation as the internal architecture. We now define the observable: probability lift over rare-valid regions of trajectory space. Because trajectory spaces may be continuous or high-dimensional, rarity is defined at finite observational resolution.

\begin{definition}[Rare-valid set at finite resolution] Let \(V\subset\Omega\) denote the set of valid trajectories. Validity is domain-dependent. In a physical system, validity means physical admissibility. In a biological system, it may mean viability or functional organization. In a symbolic system, it may mean grammaticality, semantic coherence, factual consistency, and task relevance. Let \(\Pi_\eta\) be a finite measurable partition of \(\Omega\) at observational resolution \(\eta\). For a cell \(C\in\Pi_\eta\), \(P_0(C)\) is the passive probability of observing a trajectory in that cell. A rare-valid set \(V_{\delta,\eta}\) is a union of valid cells with small passive probability and target passive mass \(\delta\). When exact normalization is possible, we choose \begin{equation} P_0(V_{\delta,\eta})=\delta. \end{equation} For a finite partition, exact equality need not hold for every \(\delta\). In that case one may either choose an attainable value of \(\delta\), use \(P_0(V_{\delta,\eta})\le \delta\), or obtain exact normalization by randomized inclusion of a boundary cell. Equivalently, \(V_{\delta,\eta}\) may be taken as the lowest-baseline-probability valid region of total passive mass \(\delta\), up to this boundary convention. When the resolution \(\eta\) is fixed, we write \(V_\delta\) for \(V_{\delta,\eta}\). \end{definition}

\begin{definition}[Thermodynamic intelligence at resolution \(\delta\)]
Let \(P\) be the trajectory distribution induced by a system, and let \(V_\delta\) satisfy \(P_0(V_\delta)=\delta\). Define the \(\delta\)-scale thermodynamic intelligence of \(P\) relative to \(P_0\) and \(V_\delta\) as
\begin{equation}
    \calI_\delta(P;P_0,V_\delta)
    =
    \frac{P(V_\delta)-P_0(V_\delta)}{\delta}
    =
    \frac{P(V_\delta)-\delta}{\delta}.
    \label{eq:Idelta}
\end{equation}
Equivalently, in density notation,
\begin{equation}
    \calI_\delta(P;P_0,V_\delta)
    =
    \frac{1}{\delta}
    \int_{V_\delta}
    \left(\dd P-\dd P_0\right).
\end{equation}
When the limit exists, define
\begin{equation}
    \calI(P;P_0,V)
    =
    \lim_{\delta\to 0^+}
    \calI_\delta(P;P_0,V_\delta).
\end{equation}
\end{definition}

If \(P=P_0\), then \(\calI_\delta=0\). If a system deterministically realizes a rare-valid cell in \(V_\delta\), so that \(P(V_\delta)=1\), then
\begin{equation}
    \calI_\delta
    =
    \frac{1-\delta}{\delta}
    \approx
    \frac{1}{\delta}.
\end{equation}
Thus the measure ranges naturally from zero for passive systems to very large values for ideal information engines that select extremely rare valid trajectories. The definition credits only probability mass moved into futures that are both low-probability under \(P_0\) and valid under the domain constraints. In flexible settings, recursive self-simulation should improve both the identification of such futures and the actions that make them more likely. Lemma~\ref{lem:rare_valid_binary_kl} records the information-theoretic consequence: amplifying a rare-valid set requires path-measure divergence from the passive baseline.
\begin{lemma}[Rare-valid amplification implies path-measure divergence]
\label{lem:rare_valid_binary_kl}
Let \(P_0(V_\delta)=\delta\), let \(p=P(V_\delta)\), and suppose \(0<\delta<1\). Define
\begin{equation}
    \calI_\delta
    =
    \frac{p-\delta}{\delta}.
\end{equation}
Then
\begin{equation}
    \KL(P\,\|\,P_0)
    \geq
    d(p\,\|\,\delta),
    \label{eq:binary_kl_bridge}
\end{equation}
where
\begin{equation}
    d(p\,\|\,\delta)
    =
    p\log\frac{p}{\delta}
    +
    (1-p)\log\frac{1-p}{1-\delta}
\end{equation}
is the binary KL divergence. Equivalently, when \(p=\delta(1+\calI_\delta)\leq1\),
\begin{equation}
    \KL(P\,\|\,P_0)
    \geq
    d\!\left(\delta(1+\calI_\delta)\,\|\,\delta\right).
    \label{eq:Idelta_kl_bridge}
\end{equation}
\end{lemma}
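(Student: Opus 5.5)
The plan is to apply the data-processing inequality for relative entropy to the coarse-graining map that records only whether a trajectory lies in \(V_\delta\). Concretely, let \(f:\Omega\to\{0,1\}\) be the measurable indicator \(f(\omega)=\mathds{1}\{\omega\in V_\delta\}\); measurability holds because \(V_\delta\) is a union of cells of the finite measurable partition \(\Pi_\eta\). The pushforwards are Bernoulli laws: \(P\circ f^{-1}=\mathrm{Bern}(p)\) and \(P_0\circ f^{-1}=\mathrm{Bern}(\delta)\). Data processing (equivalently, the log-sum inequality or Jensen applied cell by cell) then gives
\begin{equation*}
\KL(P\,\|\,P_0)\;\geq\;\KL\bigl(\mathrm{Bern}(p)\,\|\,\mathrm{Bern}(\delta)\bigr)=d(p\,\|\,\delta),
\end{equation*}
which is \eqref{eq:binary_kl_bridge}.

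The key steps are as follows.

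\emph{Step 1: the trivial case.} If \(P\not\ll P_0\), then \(\KL(P\,\|\,P_0)=+\infty\) and the inequality holds trivially.

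\emph{Step 2: the absolutely continuous case.} Otherwise write \(L=dP/dP_0\), so that \(\KL(P\,\|\,P_0)=\int L\log L\,dP_0\). Split this integral over \(V_\delta\) and its complement \(V_\delta^c\). On each piece, apply Jensen's inequality to the convex function \(\phi(x)=x\log x\) under the normalized conditional measure \(P_0(\cdot\mid V_\delta)\), respectively \(P_0(\cdot\mid V_\delta^c)\). On \(V_\delta\) this gives
\begin{equation*}
\int_{V_\delta} L\log L\,dP_0\;\geq\;\delta\,\phi\!\left(\frac{p}{\delta}\right)=p\log\frac{p}{\delta},
\end{equation*}
and analogously \((1-p)\log\frac{1-p}{1-\delta}\) on the complement. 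The hypothesis \(0<\delta<1\) guarantees that both conditional measures are well defined.

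\emph{Step 3: boundary values of \(p\).} The values \(p=0\) and \(p=1\) are handled by the convention \(0\log 0=0\); Jensen's inequality still applies with \(\phi(0)=0\).

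\emph{Step 4: the reformulation.} The statement \eqref{eq:Idelta_kl_bridge} is pure substitution: by definition, \(p=\delta(1+\calI_\delta)\), and the constraint \(p\le 1\) is exactly what makes the binary divergence well defined.

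The main obstacle, such as it is, is the measure-theoretic bookkeeping. One must justify integrating \(L\log L\) when it is not integrable; its negative part is bounded below by \(-1/e\), so the integral is well defined in \([-1/e\cdot\text{const},\infty]\). One must also handle the degenerate boundary cases. Both are routine, so I would cite the standard data-processing inequality~\citep{coverthomas2006elements} and include the Jensen split only as a self-contained verification.
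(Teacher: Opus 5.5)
Your proposal is correct and follows the paper's proof: the paper also applies the data-processing inequality to the binary coarse-graining $\{V_\delta, V_\delta^c\}$ to get $\KL(P\,\|\,P_0)\geq d(p\,\|\,\delta)$, then substitutes $p=\delta(1+\calI_\delta)$. Your Jensen split over $V_\delta$ and its complement is simply a self-contained verification of that data-processing step; note also that the condition $p\le 1$ holds automatically, since $p=P(V_\delta)$ is a probability.
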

\noindent\emph{Proof.} See Appendix~\ref{app:proofs}.

\subsection{Trajectory-Space Thermodynamics}
\label{sec:thermo}

Rare-valid lift is a change in trajectory probabilities, so its natural thermodynamic setting is path space. Let \((\Omega,\mathcal F)\) denote a measurable space of trajectories \(\omega\) over \([0,\tau]\). Let \(P_0\) be the passive path measure and \(P_B\) the path measure induced by an agent \(B\) acting through feedback. Let \(S(\omega)\) denote physical entropy production along \(\omega\), and define the dimensionless entropy production
\cgather{
    \sigma(\omega)=\frac{S(\omega)}{k_B}.
}
For matched entropy-production bins \(A_s^+\) and \(A_s^-\), where \(A_s^+\) collects trajectories with \(\sigma(\omega)\approx+s\) and \(A_s^-\) collects the corresponding time-reversed or otherwise matched trajectories with \(\sigma(\omega)\approx-s\), the passive fluctuation-theorem relation is written at event level as
\cgather{
    \log\frac{P_0(A_s^+)}{P_0(A_s^-)}\simeq s.
    \label{eq:passive_event_relation}
}%
Equivalently, in dimensional units, if the matched bins correspond to entropy productions \(+\Delta S\) and \(-\Delta S\), then the right-hand side is \(\Delta S/k_B\). The symbol \(\simeq\) marks event-level coarse-graining: exact equality requires bins and matching conventions that preserve the underlying trajectory-level fluctuation relation.

Now, feedback replaces the passive law \(P_0\) by the controlled law \(P_B\). Microscopic feedback fluctuation relations generally contain trajectory-dependent measurement and information terms, so a universal scalar correction need not exist after coarse-graining. For the fixed bins used here, we therefore define the event-level information correction directly:
\cgather{
    J_s(B)
    \triangleq 
    s-
    \log\frac{P_B(A_s^+)}{P_B(A_s^-)}.
    \label{eq:Js_def}
}
Equivalently,
\cgather{
    \log\frac{P_B(A_s^+)}{P_B(A_s^-)}
    =
    s-J_s(B).
    \label{eq:controlled_event_relation}
}
Thus \(J_s(B)\) records how feedback changes the entropy-production log-ratio on the chosen bins relative to the passive fluctuation-theorem scale. It is a coarse-grained diagnostic, not a complete thermodynamic balance; measurement, memory, computation, control, and erasure remain part of the full physical accounting.

\paragraph{Coarse-grained path-deviation stability}
\label{sec:theorem1}

We next record a stability bound for the entropy-bin signatures just defined. Let \(P\) and \(Q\) be path measures on \((\Omega,\mathcal F)\). For matched entropy-production events \(A_s^+\) and \(A_s^-\), define
\cgather{
    \Delta_s(P,Q)
    \triangleq
    \log \frac{P(A_s^+)}{P(A_s^-)}
    -
    \log \frac{Q(A_s^+)}{Q(A_s^-)}.
    \label{eq:delta_pq}
}
For \(P=P_B\) and \(Q=P_0\), write \(\Delta_s(B)=\Delta_s(P_B,P_0)\). If the passive fluctuation relation holds in the dimensionless convention, then
\cgather{
    \Delta_s(B)
    =
    \log\frac{P_B(A_s^+)}{P_B(A_s^-)}
    -
    s,
}
with \(s\) replaced by \(\Delta S/k_B\) in dimensional units.

\begin{assumption}[Nondegenerate entropy bins]
\label{ass:nondegenerate_bins}
For the two path measures being compared, there exists \(m_s>0\) such that
\cgather{
    P(A_s^+),\,P(A_s^-),\,Q(A_s^+),\,Q(A_s^-)
    \geq m_s.
    \label{eq:nondegenerate_bins}
}
\end{assumption}

\begin{theorem}[Coarse-grained path-deviation bound]
\label{thm:path_deviation}
Under Assumption~\ref{ass:nondegenerate_bins},
\cgather{
    |\Delta_s(P,Q)|
    \leq
    \frac{\sqrt{2}}{m_s}
    \sqrt{\KL(P\,\|\,Q)}.
    \label{eq:path_deviation_bound}
}
In particular,
\cgather{
    |\Delta_s(B)|
    \leq
    \frac{\sqrt{2}}{m_s}
    \sqrt{\KL(P_B\,\|\,P_0)}.
    \label{eq:controlled_passive_deviation_bound}
}
\end{theorem}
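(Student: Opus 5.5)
The plan is to reduce the log-ratio difference to differences of event probabilities, control those through the mean value theorem using the floor $m_s$ of Assumption~\ref{ass:nondegenerate_bins}, and then pass from event probabilities to total variation and from total variation to KL via Pinsker's inequality. Write $p_\pm=P(A_s^\pm)$ and $q_\pm=Q(A_s^\pm)$, and regroup the definition in Eq.~\eqref{eq:delta_pq} as
\begin{equation*}
\Delta_s(P,Q)=\bigl(\log p_+-\log q_+\bigr)-\bigl(\log p_--\log q_-\bigr).
\end{equation*}
Since $\log$ is $1/m_s$-Lipschitz on $[m_s,1]$, the assumption gives $|\log p_\pm-\log q_\pm|\le |p_\pm-q_\pm|/m_s$, and therefore
\begin{equation*}
|\Delta_s(P,Q)|\le \frac{|p_+-q_+|+|p_--q_-|}{m_s}.
\end{equation*}

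Next I bound the numerator by total variation. The matched bins $A_s^+$ and $A_s^-$ collect trajectories with $\sigma\approx+s$ and $\sigma\approx-s$ respectively, so for $s>0$ and narrow bins they are disjoint. For disjoint events, $\sum_\pm|P(A_s^\pm)-Q(A_s^\pm)|\le \sum_{C}|P(C)-Q(C)|$ over the three-cell partition $\{A_s^+,A_s^-,\Omega\setminus(A_s^+\cup A_s^-)\}$, and this sum is at most $2\,\TV(P,Q)$ with $\TV(P,Q)=\sup_{A}|P(A)-Q(A)|$. If disjointness is not granted, I would fall back on $|p_\pm-q_\pm|\le \TV(P,Q)$ for each event separately; this also gives a total of $2\,\TV(P,Q)$.

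Pinsker's inequality, $\TV(P,Q)\le\sqrt{\KL(P\,\|\,Q)/2}$, then gives
\begin{equation*}
|\Delta_s(P,Q)|\le \frac{2}{m_s}\sqrt{\frac{\KL(P\,\|\,Q)}{2}}=\frac{\sqrt2}{m_s}\sqrt{\KL(P\,\|\,Q)},
\end{equation*}
which is exactly Eq.~\eqref{eq:path_deviation_bound}. If $\KL(P\,\|\,Q)=\infty$ (for instance when $P\not\ll Q$) the bound is trivial. Eq.~\eqref{eq:controlled_passive_deviation_bound} follows by setting $P=P_B$ and $Q=P_0$, which requires only that Assumption~\ref{ass:nondegenerate_bins} hold for this pair.

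The main obstacle is constant bookkeeping: the Lipschitz step, the total-variation step and Pinsker must be combined so the constant comes out as $\sqrt2/m_s$ rather than something larger. That bookkeeping depends on using the $2\,\TV$ bound for the sum of the two deviations and on taking Pinsker in the convention where $\TV$ is the supremum over events, not the $L^1$ norm.
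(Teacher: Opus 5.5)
Your proposal is correct and matches the paper's proof step for step: the triangle inequality on the log differences, the $1/m_s$-Lipschitz bound for $\log$ on $[m_s,1]$, the $2\,\TV$ bound via the three-cell partition, and Pinsker in the sup-over-events convention. Your fallback of bounding each event deviation by $\TV$ separately is a small gain in robustness, because the paper's partition argument tacitly assumes that $A_s^+$ and $A_s^-$ are disjoint.
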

\noindent\emph{Proof.} See Appendix~\ref{app:proofs}.

Theorem~\ref{thm:path_deviation} gives the thermodynamic role of path-measure divergence. Lemma~\ref{lem:rare_valid_binary_kl} shows that rare-valid amplification requires divergence from the passive law. Theorem~\ref{thm:path_deviation} shows that such divergence also controls how much coarse-grained entropy-production log-ratios can change on fixed nondegenerate bins. Thus the rare-valid lift is not an isolated score: when a controller reweights trajectory probabilities, the induced change is constrained in the same path-measure geometry that governs coarse-grained thermodynamic signatures. The bound is intentionally finite-bin and moderate-event; rare-event amplification itself is handled by Lemma~\ref{lem:rare_valid_binary_kl}.

\paragraph{Auxiliary model-to-control continuity}
\label{sec:model_to_control_auxiliary}

The fidelity theorems above are rare-set results. A separate continuity statement compares coarse-grained thermodynamic signatures induced by nearby path laws. It does not prove high thermodynamic intelligence from high fidelity; it only says that finite-depth controlled laws inherit the entropy-bin signatures of an ideal controlled law when the induced path laws are close.

Let \(P_B^{(k)}\) denote the controlled path law induced by a policy computed from the \(k\)-level recursive internal model \(r_B^{(k)}\). Let \(P_B^\star\) denote the ideal controlled path law induced by the limiting or perfectly faithful recursive model for the same objective and admissible control class. Let \(\varepsilon_k\geq0\) denote intervention-relevant prediction error.

\begin{assumption}[Model-to-control stability]
\label{ass:model_to_control}
There exist a constant \(C>0\) and a modulus \(\rho\), with \(\rho(\varepsilon)\to0\) as \(\varepsilon\to0\), such that
\cgather{
    D_{\mathrm{KL}}
    \left(
    P_B^{(k)}\,\|\,P_B^\star
    \right)
    \leq
    C\rho(\varepsilon_k).
    \label{eq:model_to_control_stability}
}
\end{assumption}

\begin{proposition}[Recursive fidelity controls convergence to the ideal thermodynamic signature]
\label{prop:recursive_signature_convergence}
Assume model-to-control stability. Suppose the entropy bins are nondegenerate under \(P_B^{(k)}\) and \(P_B^\star\), with lower bound \(m_s>0\). Then
\cgather{
    \left|
    \Delta_s(P_B^{(k)},P_0)
    -
    \Delta_s(P_B^\star,P_0)
    \right|
    \leq
    \frac{\sqrt{2C}}{m_s}
    \sqrt{\rho(\varepsilon_k)}.
    \label{eq:recursive_signature_convergence}
}
Consequently, if \(\varepsilon_k\to0\), then the finite-depth thermodynamic signature converges to the ideal controlled thermodynamic signature at the rate determined by \(\rho\).
\end{proposition}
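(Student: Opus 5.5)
The plan is to reduce Proposition~\ref{prop:recursive_signature_convergence} to Theorem~\ref{thm:path_deviation}, applied to the pair $P=P_B^{(k)}$ and $Q=P_B^\star$. The key observation is that the passive law $P_0$ cancels in the difference of the two signatures. By Eq.~\eqref{eq:delta_pq},
\begin{align*}
\Delta_s(P_B^{(k)},P_0)-\Delta_s(P_B^\star,P_0)
&=\log\frac{P_B^{(k)}(A_s^+)}{P_B^{(k)}(A_s^-)}-\log\frac{P_B^\star(A_s^+)}{P_B^\star(A_s^-)}\\
&=\Delta_s(P_B^{(k)},P_B^\star).
\end{align*}
The two copies of $\log[P_0(A_s^+)/P_0(A_s^-)]$ cancel exactly. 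This requires only that $P_0(A_s^\pm)>0$, so that each term is finite; I would state this as implicit in the definition of $\Delta_s(\cdot,P_0)$. Notably, no fluctuation relation for $P_0$ is needed.

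Next, the nondegeneracy hypothesis says exactly that Assumption~\ref{ass:nondegenerate_bins} holds for the pair $(P_B^{(k)},P_B^\star)$ with constant $m_s$. Theorem~\ref{thm:path_deviation} then gives
\[
|\Delta_s(P_B^{(k)},P_B^\star)|\le \frac{\sqrt2}{m_s}\sqrt{\KL(P_B^{(k)}\,\|\,P_B^\star)}.
\]
Inserting Assumption~\ref{ass:model_to_control} and using monotonicity of the square root yields
\[
|\Delta_s(P_B^{(k)},P_B^\star)|\le \frac{\sqrt2}{m_s}\sqrt{C\rho(\varepsilon_k)}=\frac{\sqrt{2C}}{m_s}\sqrt{\rho(\varepsilon_k)},
\]
which is Eq.~\eqref{eq:recursive_signature_convergence}. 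The convergence statement follows because $\rho(\varepsilon_k)\to0$ as $\varepsilon_k\to0$, so the right-hand side tends to $0$ at rate $\sqrt{\rho(\varepsilon_k)}$.

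There is no real obstacle; the main point needing care is the direction of the KL divergence. Theorem~\ref{thm:path_deviation} is stated with $\KL(P\,\|\,Q)$ and $(P,Q)$ ordered as in $\Delta_s(P,Q)$. Assumption~\ref{ass:model_to_control} bounds $\KL(P_B^{(k)}\,\|\,P_B^\star)$, which matches the choice $P=P_B^{(k)}$ and $Q=P_B^\star$. The absolute value makes the sign of $\Delta_s$ irrelevant. If the KL in the model-to-control assumption were infinite, the bound would still be trivially true. For completeness, I would also recall why Theorem~\ref{thm:path_deviation} holds. The function $\log$ is $1/m_s$-Lipschitz on $[m_s,1]$, so each log-probability difference is at most $|P(A)-Q(A)|/m_s\le\TV(P,Q)/m_s$. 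Summing the two terms gives $2\TV/m_s$, and Pinsker's inequality $\TV\le\sqrt{\KL/2}$ gives the $\sqrt2/m_s$ constant. Since that theorem is assumed, however, the proof itself is just the cancellation step followed by a substitution.
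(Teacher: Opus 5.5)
Your proposal is correct and matches the paper's proof: cancel the $P_0$ terms to obtain $\Delta_s(P_B^{(k)},P_B^\star)$, apply Theorem~\ref{thm:path_deviation} with $P=P_B^{(k)}$ and $Q=P_B^\star$, then substitute the model-to-control bound. Your checks on the direction of the KL divergence and on the positivity of $P_0(A_s^\pm)$ are appropriate and go slightly beyond what the paper states explicitly.
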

\noindent\emph{Proof.} See Appendix~\ref{app:proofs}.

\subsection{Imperfect Rare-Set Identification}
\label{sec:theorem2}
The definition above assumes access to the true rare-valid set \(V_\delta\). Real agents infer an estimated set \(V'_\delta\), so amplification can be spent on false-positive trajectories that are rare but not valid. We model this protocol-level bookkeeping penalty.

\paragraph{Perfect identification}

Assume that \(P\) is absolutely continuous with respect to \(P_0\) on the true rare-valid set and amplifies that set by a constant likelihood factor \(\alpha\):
\begin{equation}
\frac{dP}{dP_0}(\omega)=\alpha,
\qquad \omega\in V_\delta,
\end{equation}
with \(P_0(V_\delta)=\delta\) and \(\alpha\delta\le 1\). Outside \(V_\delta\), \(P\) is renormalized so that it remains a probability distribution.

Substituting into Definition~2 gives
\begin{equation}
\mathcal I_\delta
=
\frac{P(V_\delta)-P_0(V_\delta)}{\delta}
=
\frac{\alpha\delta-\delta}{\delta}
=
\alpha-1.
\end{equation}
Thus, at fixed resolution \(\delta\),
$ \alpha=\mathcal I_\delta+1.$ If the limit defining \(\mathcal I\) exists and the amplification factor has a corresponding limiting value, then \(\alpha=\mathcal I+1\) in that limit.

The local log-likelihood entropy bookkeeping associated with an amplified rare-valid trajectory is
\cgathers{
S_{\rm ideal}^{\rm loc}
=
-k_B \log \frac{dP}{dP_0}
=
-k_B\log\alpha
=
-k_B\log(\mathcal I_\delta+1).
}
This is a local likelihood-accounting term, not a complete entropy balance without a specified physical protocol.

\paragraph{Imperfect identification}

Let the agent identify an approximate rare-valid set \(V'_\delta\) instead of \(V_\delta\). For the main theorem we analyze the conservative-identification case in which the estimated set contains the true rare-valid set,
\cgather{
    V_\delta \subseteq V'_\delta.
    \label{eq:conservative_identification}
}
This isolates false-positive cost and excludes false negatives. Define
\cgather{
    E_\delta = V'_\delta\setminus V_\delta,
    \qquad
    p_{\rm err}=P_0(E_\delta).
}
Assume that the agent amplifies \(V'_\delta\) by the same likelihood factor \(\alpha\):
\begin{equation}
\frac{dP}{dP_0}(\omega)=\alpha,
\qquad \omega\in V'_\delta.
\end{equation}
Require the normalization condition
\cgather{
    \alpha(\delta+p_{\rm err})\leq 1,
    \label{eq:imperfect_normalization_condition}
}
so that the amplified mass assigned to \(V'_\delta\) remains compatible with a probability law. Under \eqref{eq:conservative_identification}, \(P(V_\delta)=\alpha\delta\), hence \(\alpha=\calI_\delta+1\), and \(P(E_\delta)=\alpha p_{\rm err}\).

False-positive correction depends on the physical protocol used to store, test, correct, or erase erroneous assignments. We therefore keep the cost explicit and report both the expected per-trial term and the version normalized per amplified true rare-valid trajectory.

\begin{assumption}[Error-resolution protocol]
\label{ass:error_resolution_protocol}
For a false-positive region \(E_\delta\) with baseline mass
\(p_{\mathrm{err}}=P_0(E_\delta)\), the protocol used to resolve,
correct, or erase amplified erroneous assignments has entropy cost
\(k_B c(p_{\mathrm{err}})\) per unit amplified false-positive mass, where \(c(p)\geq 0\). The baseline-surprisal Landauer bookkeeping protocol corresponds to
\(c(p)=\log(1/p)\). This convention charges erroneous assignments
according to their rarity under the passive baseline \(P_0\), not
according to their mass after amplification. Other physical
implementations may induce different cost functions.
\end{assumption}

\begin{theorem}[Conditional imperfect rare-set identification accounting]
\label{thm:conditional_imperfect_penalty}
Let \(V_\delta\) be the true rare-valid set with \(P_0(V_\delta)=\delta\), and let \(V'_\delta\) be the agent's estimated rare-valid set. Suppose \(V_\delta\subseteq V'_\delta\), let
\[
    p_{\mathrm{err}}=P_0(V'_\delta\setminus V_\delta),
\]
assume \(\dd P/\dd P_0=\alpha\) on \(V'_\delta\), and assume \(\alpha(\delta+p_{\mathrm{err}})\leq1\). Under Assumption~\ref{ass:error_resolution_protocol}, the expected false-positive overhead per trial is
\cgather{
    \overline{\Delta S}_{\mathrm{err}}
    \triangleq 
    \alpha k_B p_{\mathrm{err}}\,c(p_{\mathrm{err}}).
    \label{eq:DeltaSerr_general}
}
The corresponding expected protocol-adjusted bookkeeping per trial is
\cgather{
    \overline S_{\mathrm{imperfect}}
    =
    -\alpha\delta k_B\log\alpha
    +
    \overline{\Delta S}_{\mathrm{err}}.
    \label{eq:imperfect_expected_general}
}
Equivalently, normalizing by the amplified true rare-valid mass \(P(V_\delta)=\alpha\delta\), the protocol-adjusted local bookkeeping per amplified true rare-valid trajectory is
\cgather{
    S_{\mathrm{imperfect}}^{\mathrm{loc}}
    =
    -k_B\log\alpha
    +
    \frac{p_{\mathrm{err}}}{\delta}k_B c(p_{\mathrm{err}}).
    \label{eq:imperfect_local_general}
}
For the baseline-surprisal Landauer bookkeeping protocol \(c(p)=\log(1/p)\),
\cgather{
    \overline{\Delta S}_{\mathrm{err}}
    =
    \alpha k_B p_{\mathrm{err}}
    \log\frac{1}{p_{\mathrm{err}}},
    \label{eq:DeltaSerr_baseline_landauer}
}
and
\cgather{
    S_{\mathrm{imperfect}}^{\mathrm{loc}}
    =
    -k_B\log\alpha
    +
    \frac{p_{\mathrm{err}}}{\delta}k_B
    \log\frac{1}{p_{\mathrm{err}}}.
    \label{eq:imperfect_alpha}
}
Using \(\alpha=\calI_\delta+1\), this becomes
\cgathers{
    S_{\mathrm{imperfect}}^{\mathrm{loc}}
    =
    -k_B\log(\calI_\delta+1)
    +
    \frac{p_{\mathrm{err}}}{\delta} k_B
    \log\frac{1}{p_{\mathrm{err}}}.
}
The expected version is
\mltlne{
    \overline S_{\mathrm{imperfect}}
    =
    -(\calI_\delta+1)\delta k_B\log(\calI_\delta+1)
    \\+
    (\calI_\delta+1)k_Bp_{\mathrm{err}}
    \log\frac{1}{p_{\mathrm{err}}}.
}
\end{theorem}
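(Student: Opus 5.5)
The statement is essentially a bookkeeping computation, so the plan is to derive each displayed quantity directly from the assumptions. No earlier theorem beyond the elementary identity \(\alpha=\calI_\delta+1\) is needed.

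\emph{Step 1: masses under \(P\).} From \(V_\delta\subseteq V'_\delta\), the set \(V'_\delta\) splits as the disjoint union \(V_\delta\cup E_\delta\) with \(E_\delta=V'_\delta\setminus V_\delta\). Since \(dP/dP_0=\alpha\) on \(V'_\delta\), integrating the density over each piece gives
\[
P(V_\delta)=\alpha\delta, \qquad P(E_\delta)=\alpha p_{\mathrm{err}}.
\]
The normalization condition \(\alpha(\delta+p_{\mathrm{err}})\le1\) guarantees that \(P(V'_\delta)\le1\). The remaining mass can therefore be placed on \(\Omega\setminus V'_\delta\), so \(P\) is a legitimate probability law. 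I would also note \(\calI_\delta=(\alpha\delta-\delta)/\delta=\alpha-1\), exactly as in the perfect-identification paragraph.

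\emph{Step 2: the false-positive overhead.} By Assumption~\ref{ass:error_resolution_protocol}, each unit of amplified false-positive mass is charged \(k_B c(p_{\mathrm{err}})\). The amplified false-positive mass is \(P(E_\delta)=\alpha p_{\mathrm{err}}\), so the expected per-trial charge is
\[
\overline{\Delta S}_{\mathrm{err}}=\alpha k_B\,p_{\mathrm{err}}\,c(p_{\mathrm{err}}),
\]
which is \eqref{eq:DeltaSerr_general}. For the true rare-valid part, the local term \(-k_B\log(dP/dP_0)=-k_B\log\alpha\) is constant on \(V_\delta\). Integrating it against \(P\) over \(V_\delta\) gives \(-\alpha\delta k_B\log\alpha\). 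Adding the two contributions yields \eqref{eq:imperfect_expected_general}. Dividing \eqref{eq:imperfect_expected_general} by \(P(V_\delta)=\alpha\delta\) gives the per-amplified-trajectory form \eqref{eq:imperfect_local_general}; the factor \(\alpha\) in the overhead cancels, leaving \(p_{\mathrm{err}}/\delta\).

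\emph{Step 3: specializations.} Substituting \(c(p)=\log(1/p)\) gives \eqref{eq:DeltaSerr_baseline_landauer} and \eqref{eq:imperfect_alpha}. Replacing \(\alpha\) by \(\calI_\delta+1\) then gives the two final displays.

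The main obstacle is Step 2, which is interpretive rather than technical. One must make precise that \(\overline S_{\mathrm{imperfect}}\) is \emph{defined} as the \(P\)-expectation of the local log-likelihood term over \(V_\delta\) plus the protocol charge on \(E_\delta\). Under that definition the identity is immediate, and it is not a derived physical entropy balance. I would state explicitly that the \(V_\delta\) term and the \(E_\delta\) term are evaluated only on those sets. In particular, the log-likelihood term on \(E_\delta\) and the renormalization region \(\Omega\setminus V'_\delta\) is excluded from the bookkeeping by convention, which matches the paper's caveat that these are local accounting terms. With that convention fixed, every remaining step is a one-line integration of a constant density over a set of known \(P_0\)-mass.
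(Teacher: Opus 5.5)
Your proposal is correct and follows the paper's proof essentially step for step. Both compute \(P(V_\delta)=\alpha\delta\) and \(P(E_\delta)=\alpha p_{\mathrm{err}}\) from the constant density on \(V'_\delta\), charge the false-positive mass via Assumption~\ref{ass:error_resolution_protocol}, add the ideal term \(-\alpha\delta k_B\log\alpha\), divide by \(\alpha\delta\), and then specialize to \(c(p)=\log(1/p)\) and \(\alpha=\calI_\delta+1\). Your remark that \(\overline S_{\mathrm{imperfect}}\) is a bookkeeping definition, with the log-likelihood term on \(E_\delta\) and on \(\Omega\setminus V'_\delta\) excluded by convention, is a useful clarification that the paper leaves implicit.
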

\noindent\emph{Proof.} See Appendix~\ref{app:proofs}.

\subsection{Link to recursive self-modeling resolution}
The dependence of \(p_{\rm err}\) on recursive model resolution is a statistical learning question, not a thermodynamic law. We therefore state rare-set stability as a hypothesis. In the conservative-identification case, the inferred set covers the true rare-valid set and only the excess false-positive mass shrinks with model fidelity.

Let \(V_\delta^{(k)}\) denote the rare-valid set inferred from the \(k\)-level recursive model \(r_B^{(k)}\). Define
\cgather{
p_{\rm err}^{(k)}
=
P_0\!\left(V_\delta^{(k)}\setminus V_\delta\right).
}
Let \(\varepsilon_k\) be the intervention-relevant prediction error from
Assumption~\ref{ass:model_to_control}.
\begin{hypothesis}[Rare-set stability under model convergence]\label{hyp:rare_set_stability}
There exists a modulus of continuity \(\rho_\delta\), with \(\rho_\delta(\varepsilon)\to 0\) as \(\varepsilon\to 0\), such that
\begin{equation}
p_{\rm err}^{(k)}
\le
\rho_\delta(\varepsilon_k).
\end{equation}
\end{hypothesis}
This hypothesis is nontrivial: small KL model error need not imply small false-positive mass on a rare set unless the rare-valid boundary is stable. A concrete finite-resolution sufficient condition is given in Appendix~\ref{app:rare_set_stability}; it shows that Hypothesis~\ref{hyp:rare_set_stability} follows from uniform validity-score convergence together with a margin bound on the rare-valid boundary.

\begin{corollary}[Conditional recursive self-modeling entropy efficiency]
\label{cor:conditional_entropy_efficiency}
Assume the baseline-surprisal Landauer bookkeeping protocol and
Hypothesis~\ref{hyp:rare_set_stability}. Suppose
\(V_\delta\subseteq V_\delta^{(k)}\),
\((\calI_\delta+1)(\delta+p_{\mathrm{err}}^{(k)})\leq1\),
\(p_{\mathrm{err}}^{(k)}<e^{-1}\), and
\(\rho_\delta(\varepsilon_k)<e^{-1}\). Define the expected excess bookkeeping penalty
\cgather{
    \overline{\Delta S}_{\mathrm{err}}^{(k)}
    \triangleq 
    \overline S_k
    +
    (\calI_\delta+1)\delta k_B\log(\calI_\delta+1).
    \label{eq:DeltaSerr_k_def}
}
Then
\mltlne{
    0
    \leq
    \overline{\Delta S}_{\mathrm{err}}^{(k)}
    =
    (\calI_\delta+1)k_B
    p_{\mathrm{err}}^{(k)}
    \log\frac{1}{p_{\mathrm{err}}^{(k)}}
    \\\leq
    (\calI_\delta+1)k_B
    \rho_\delta(\varepsilon_k)
    \log\frac{1}{\rho_\delta(\varepsilon_k)}.
    \label{eq:DeltaSerr_k_bound}}
Equivalently, normalized per amplified true rare-valid trajectory,
\mltlne{
    0
    \leq
    \Delta s_{\mathrm{err}}^{(k)}
    \triangleq 
    S_k^{\mathrm{loc}}
    +
    k_B\log(\calI_\delta+1)
    \\
    =
    \frac{k_B}{\delta}
    p_{\mathrm{err}}^{(k)}
    \log\frac{1}{p_{\mathrm{err}}^{(k)}}
    \leq
    \frac{k_B}{\delta}
    \rho_\delta(\varepsilon_k)
    \log\frac{1}{\rho_\delta(\varepsilon_k)}.
    \label{eq:Delta_s_err_k_bound}
}
Consequently,
\cgathers{
    \varepsilon_k\to 0
    \Longrightarrow\quad
    p_{\mathrm{err}}^{(k)}\to 0
    \Longrightarrow\quad
    \overline{\Delta S}_{\mathrm{err}}^{(k)}\to 0,
    \Delta s_{\mathrm{err}}^{(k)}\to 0,
}
and therefore
\cgather{
    S_k^{\mathrm{loc}}
    \to
    -k_B\log(\calI_\delta+1)
}
from above.
\end{corollary}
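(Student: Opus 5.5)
The corollary follows by specializing Theorem~\ref{thm:conditional_imperfect_penalty} to the $k$-level inferred set and then applying Hypothesis~\ref{hyp:rare_set_stability} through a monotonicity argument.

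\textbf{Step 1: specialize the theorem.} I take $V'_\delta=V_\delta^{(k)}$, so that $p_{\mathrm{err}}=p_{\mathrm{err}}^{(k)}$. The hypotheses $V_\delta\subseteq V_\delta^{(k)}$ and $(\calI_\delta+1)(\delta+p_{\mathrm{err}}^{(k)})\leq1$ are exactly those of that theorem, with $\alpha=\calI_\delta+1$. (This holds because $P(V_\delta)=\alpha\delta$ under conservative identification.) Reading $\overline S_k$ and $S_k^{\mathrm{loc}}$ as $\overline S_{\mathrm{imperfect}}$ and $S_{\mathrm{imperfect}}^{\mathrm{loc}}$ for this choice, the baseline-surprisal expressions give
\[
\overline S_k=-(\calI_\delta+1)\delta k_B\log(\calI_\delta+1)+(\calI_\delta+1)k_Bp_{\mathrm{err}}^{(k)}\log\frac{1}{p_{\mathrm{err}}^{(k)}}.
\]
Adding $(\calI_\delta+1)\delta k_B\log(\calI_\delta+1)$ yields the claimed equality for $\overline{\Delta S}_{\mathrm{err}}^{(k)}$. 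The per-trajectory identity follows the same way from Eq.~\eqref{eq:imperfect_alpha}, since $\frac{p_{\mathrm{err}}}{\delta}$ times the surprisal remains after adding $k_B\log(\calI_\delta+1)$. If $p_{\mathrm{err}}^{(k)}=0$, I use the convention $0\log(1/0)=0$.

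\textbf{Step 2: lower bounds.} Nonnegativity is immediate: $p\log(1/p)\geq0$ on $[0,1]$, and $\calI_\delta+1=\alpha\geq1$. Indeed $\alpha>0$ in any case, so the sign comes from the surprisal factor alone.

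\textbf{Step 3: upper bounds, the only nontrivial step.} Let $f(p)=p\log(1/p)$. Then $f'(p)=\log(1/p)-1$, which is positive for $p<e^{-1}$, so $f$ is increasing on $[0,e^{-1}]$. Both $p_{\mathrm{err}}^{(k)}$ and $\rho_\delta(\varepsilon_k)$ lie in this interval by assumption, and Hypothesis~\ref{hyp:rare_set_stability} gives $p_{\mathrm{err}}^{(k)}\leq\rho_\delta(\varepsilon_k)$. Hence $f(p_{\mathrm{err}}^{(k)})\leq f(\rho_\delta(\varepsilon_k))$. Multiplying by the positive factors $(\calI_\delta+1)k_B$ and $k_B/\delta$ yields both upper bounds. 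This is the step where the $e^{-1}$ hypotheses are essential, since without them monotonicity fails.

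\textbf{Step 4: limits.} I hold $\delta$ and $\calI_\delta$ fixed as $k$ varies. If $\varepsilon_k\to0$, then $\rho_\delta(\varepsilon_k)\to0$, so $p_{\mathrm{err}}^{(k)}\to0$ by squeezing. Since $f(\rho)\to0$ as $\rho\to0^+$, the upper bounds vanish and both penalties tend to $0$. Then $S_k^{\mathrm{loc}}=-k_B\log(\calI_\delta+1)+\Delta s_{\mathrm{err}}^{(k)}$ with $\Delta s_{\mathrm{err}}^{(k)}\geq0$, which gives convergence from above.
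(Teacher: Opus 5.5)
Your proposal is correct and follows essentially the same argument as the paper. Both use the monotonicity of $p\log(1/p)$ below $e^{-1}$ together with $p_{\mathrm{err}}^{(k)}\le\rho_\delta(\varepsilon_k)$, multiply by the positive constants, and use $p\log(1/p)\to0$ as $p\to0^+$; your Step 1 and your nonnegativity and ``from above'' remarks spell out the specialization of Theorem~\ref{thm:conditional_imperfect_penalty} that the paper's proof leaves implicit, though your aside $\alpha=\calI_\delta+1\ge1$ is not implied by the hypotheses, and only $\alpha>0$ is needed, as you yourself note.
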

\noindent\emph{Proof.} See Appendix~\ref{app:proofs}.

Together, model-to-control stability and rare-set stability give the intended chain: recursive fidelity improves the controlled law, improves rare-set identification, and reduces false-positive bookkeeping waste. Amplification power and rare-set resolution remain distinct: a powerful but poorly resolved controller can amplify wrong futures, while a high-resolution but weakly actuating model can identify rare-valid futures without making them likely.

\section{Intelligence Calculations}

\subsection{Intelligence of Maxwell's Demons}
\label{sec:demon}

Maxwell's demon is the canonical limiting case of realized thermodynamic intelligence. In level-relative terms, the demon carries an effectively perfect simulation of the relevant gas microstate and the consequences of its gate action. It observes, stores, and acts to separate particles by velocity. Under passive dynamics, a temperature gradient is exponentially unlikely; under demon-assisted dynamics, the trajectory can become likely or deterministic. The demon is not a free entropy machine, but an idealized upper bound: near-perfect rare-valid fidelity, target identification, and actuation at the measured level.

Let \(\Delta S>0\) denote the magnitude of a local entropy reduction in the gas. For a matched entropy-producing trajectory, the fluctuation-theorem scale gives
\cgather{
    \frac{P_0(+\Delta S)}{P_0(-\Delta S)}
    \sim
    \exp\left(\frac{\Delta S}{k_B}\right).
}
Equivalently,
\cgather{
    P_0(-\Delta S)
    \sim
    P_0(+\Delta S)
    \exp\left(-\frac{\Delta S}{k_B}\right).
}
If the positive-entropy counterpart has order-one probability at the chosen coarse-graining, then the entropy-reducing event has passive probability on the scale \(\exp(-\Delta S/k_B)\). An ideal demon that realizes it with probability near one therefore has amplification
\cgather{
    \calI+1
    \sim
    \exp\left(\frac{\Delta S}{k_B}\right),
    \qquad
    \log_{10}(\calI+1)
    =
    \frac{\Delta S/k_B}{\log 10}.
    \label{eq:fluctuation_demon_scale}
}

\paragraph{Illustrative entropy-reduction calculation}

Suppose \(\Delta S=10^{-19}\,\mathrm{J/K}\). Since \(k_B=1.380649\times 10^{-23}\,\mathrm{J/K}\), we obtain
\cgather{
    \frac{\Delta S}{k_B}
    \approx
    \frac{10^{-19}}{1.380649\times 10^{-23}}
    \approx
    7243.
}
Therefore, at an \(O(1)\) positive-counterpart coarse-graining,
\cgathers{
    P_0(-\Delta S)
    \sim
    e^{-7243}
    \approx
    10^{-3146},
    \qquad
    \calI+1
    \sim
    e^{7243}
    \approx
    10^{3146}.
}
Thus an ideal demon that deterministically realizes this trajectory has a thermodynamic-intelligence scale on the order of \(\calI\sim 10^{3146}\). This should be read as a fluctuation-theorem scale calculation, not as a complete microscopic probability model for a particular gas protocol.

\paragraph{Velocity-selection demon}
Consider a classical ideal gas of \(N\) particles at temperature \(T\). Let the demon select particles whose kinetic energy exceeds a threshold
$    \epsilon    =     \frac{m v_c^2}{2k_B T}$. For a three-dimensional Maxwell--Boltzmann gas, the dimensionless kinetic energy \(x=mv^2/(2k_BT)\) follows a Gamma distribution with density
\cgather{
    f(x)
    =
    \frac{2}{\sqrt{\pi}}x^{1/2}e^{-x}.
}
The fraction of particles above threshold is
\cgathers{
    q(\epsilon)
    =
    \Prob(x>\epsilon)
    =
    \frac{\Gamma(3/2,\epsilon)}{\Gamma(3/2)}
    =
    \operatorname{erfc}(\sqrt{\epsilon})
    +
    \frac{2}{\sqrt{\pi}}\sqrt{\epsilon}e^{-\epsilon}.
}
The conditional mean dimensionless energy above threshold is
\cgather{
    \Exp[x\mid x>\epsilon]
    =
    \frac{\Gamma(5/2,\epsilon)}{\Gamma(3/2,\epsilon)}.
}
The excess dimensionless kinetic energy above the thermal mean \(3/2\) is
\cgather{
    \phi(\epsilon)
    =
    \frac{\Gamma(5/2,\epsilon)}{\Gamma(3/2,\epsilon)}
    -
    \frac{3}{2}.
}
The following is a local \(Q/T\)-scale estimate, not a full entropy accounting. Interpreting the selected particles' excess kinetic energy as the heat scale sorted at ambient temperature \(T\), the dimensionless local entropy-reduction proxy is
\cgather{
    \frac{|\Delta S|}{k_B}
    \approx
    \frac{N}{2}q(\epsilon)\phi(\epsilon).
    \label{eq:demon_entropy}
}
Here the factor \(1/2\) corresponds to the idealized one-shot protocol in which the demon acts on one half of the chamber. Therefore
\cgather{
    \log_{10}(\calI+1)
    \approx
    \frac{N}{2\log 10}q(\epsilon)\phi(\epsilon).
    \label{eq:demon_logI}
}

\paragraph{Concrete numerical instance (ambient nitrogen)}

To make the velocity-selection estimate explicit, consider nitrogen-like air at \(T=300\,\mathrm{K}\), with molecular mass \(m_{\mathrm{N_2}}\approx 4.65\times 10^{-26}\,\mathrm{kg}\). The cutoff speed corresponding to the dimensionless kinetic-energy threshold \(\epsilon\) is
\cgather{
    v_c(\epsilon)
    =
    \sqrt{\frac{2k_B T\epsilon}{m_{\mathrm{N_2}}}}.
}
At \(T=300\,\mathrm{K}\), this gives cutoff speeds of approximately \(597,731,844,\) and \(944\,\mathrm{m/s}\) for \(\epsilon=2,3,4,\) and \(5\), respectively. The excess kinetic energy sorted by the demon in the one-shot half-chamber protocol is
\cgathers{
    Q_{\mathrm{excess}}
    \approx
    \frac{N}{2}q(\epsilon)\phi(\epsilon)k_B T\\
    \Rightarrow 
    \frac{Q_{\mathrm{excess}}}{T}
    \approx
    k_B\frac{N}{2}q(\epsilon)\phi(\epsilon),
    \text{ and }
    \log_{10}(\calI+1)
    \approx
    \frac{Q_{\mathrm{excess}}}{k_B T\log 10}.
}
At one atmosphere and \(300\,\mathrm{K}\), an ideal gas has density \(2.44\times10^{25}\,\mathrm{m}^{-3}\), so \(1\,\mathrm{mm}^3\) contains \(N=2.44\times10^{16}\) particles. Table~\ref{tab:demon_velocity_numbers} reports the selected-particle count \(N_{\mathrm{sel}}\), sorted excess-energy scale \(Q_{\mathrm{excess}}\), and \(L=\log_{10}(\calI+1)\).

\begin{table}[!t]
\centering
\caption{Concrete velocity-selection demon estimates for nitrogen-like air at \(T=300\,\mathrm{K}\), using the corrected three-dimensional Maxwell--Boltzmann tail. \(N_{\mathrm{sel}}\) is the selected-particle count in the one-shot half-chamber protocol, \(Q_{\mathrm{excess}}\) is the local sorted excess kinetic-energy scale for \(1\,\mathrm{mm}^3\) of air, and \(L=\log_{10}(\calI+1)\). Scaled columns report the indicated common multipliers. Values are local amplification-scale estimates before measurement, memory, control, and erasure costs.}
\label{tab:demon_velocity_numbers}
\small
\setlength{\tabcolsep}{2.5pt}
\begin{tabular}{@{}ccccccccc@{}}
\toprule
\(\epsilon\) &
\begin{tabular}{@{}c@{}}\(v_c\)\\ \((\mathrm{m/s})\)\end{tabular} &
\(q\) &
\(\phi\) &
\begin{tabular}{@{}c@{}}\(N_{\mathrm{sel}}^{100}\)\end{tabular} &
\(L_{100}\) &
\begin{tabular}{@{}c@{}}\(N_{\mathrm{sel}}^{1\,\mathrm{mm}^3}\)\\\((10^{14})\)\end{tabular} &
\begin{tabular}{@{}c@{}}\(Q_{\mathrm{excess}}^{1\,\mathrm{mm}^3}\)\\\((10^{-6}\,\mathrm{J})\)\end{tabular} &
\begin{tabular}{@{}c@{}}\(L^{1\,\mathrm{mm}^3}\)\\\((10^{14})\)\end{tabular}
\\
\midrule
2 & 597 & 0.261  & 1.652 & 13.1  & 9.38 & 31.9 & 21.8 & 22.9 \\
3 & 731 & 0.112  & 2.615 & 5.58  & 6.34 & 13.6 & 14.8 & 15.5 \\
4 & 844 & 0.0460 & 3.593 & 2.30  & 3.59 & 5.61 & 8.35 & 8.76 \\
5 & 944 & 0.0186 & 4.578 & 0.928 & 1.85 & 2.27 & 4.30 & 4.50 \\
\bottomrule
\end{tabular}
\end{table}

The scale is extreme because the same selection rule acts on many microscopic degrees of freedom. For \(N=100\), \(\log_{10}(\calI+1)\approx1.85\)--\(9.38\). In \(1\,\mathrm{mm}^3\) of air, it acts on \(10^{14}\)--\(10^{15}\) eligible particles; the sorted excess energy is only \(O(10^{-5})\,\mathrm{J}\), but corresponds to roughly \(10^{15}\) thermal-scale selections.

Scaling from \(1\,\mathrm{mm}^3\) to \(1\,\mathrm{cm}^3\) multiplies \(N_{\mathrm{sel}}\), \(Q_{\mathrm{excess}}\), and \(L\) by \(10^3\), leaving \(v_c\), \(q(\epsilon)\), and \(\phi(\epsilon)\) unchanged. The local entropy reduction must still be compensated by measurement, memory, control, and erasure costs; the table gives idealized local amplification capacity, not a second-law violation.

\subsection{Symbolic Generation: Human and LLM Text}
\label{sec:symbolic}

Many intelligent systems emit symbolic sequences, including human speech or writing and LLM token streams. Even when direct thermodynamic work on the environment is unobserved, symbolic output provides an observable trajectory. The symbolic case therefore gives a finite empirical version of the rare-valid framework, using the information-theoretic language of entropy rate, coding, and sequence likelihood~\citep{shannon1948mathematical,coverthomas2006elements}.

Let \(X_{1:n}=(X_1,\ldots,X_n)\) be a symbolic sequence over a finite alphabet \(\Sigma\). For a prompt or task context \(y\), let \(P_0(\cdot\mid y)\) denote a baseline symbolic distribution on \(\Sigma^n\), and let \(P_G(\cdot\mid y)\) denote the distribution induced by generator \(G\). The baseline may be an \(n\)-gram model, a low-order Markov model, a prompt-independent language model, a lower-capacity reference generator, or a fixed decoding policy. Let \(V_n(y)\subseteq\Sigma^n\) be the valid outputs for context \(y\). Validity is task-dependent---syntactic well-formedness, semantic coherence, factual consistency, entailment, executable correctness, biological admissibility, or task relevance---and must be measurable under both \(P_0(\cdot\mid y)\) and \(P_G(\cdot\mid y)\).

\subsubsection{Symbolic rare-valid amplification}

A rare-valid symbolic set must be defined by baseline mass, not merely by a pointwise likelihood cutoff. Fix \(0<\delta<1\). Let \(R_{\delta,n}(y)\subseteq V_n(y)\) be a valid set whose baseline probability is \(\delta\):
\cgather{
    P_0(R_{\delta,n}(y)\mid y)
    =
    \sum_{x\in R_{\delta,n}(y)}P_0(x\mid y)
    =
    \delta.
    \label{eq:sym_rare_mass}
}
When exact equality is not attainable because \(\Sigma^n\) is discrete, one may use the nearest attainable mass, use \(P_0(R_{\delta,n}(y)\mid y)\leq \delta\), or randomize inclusion of a boundary element to obtain exact mass. A natural construction is to choose \(R_{\delta,n}(y)\) from the lowest-baseline-probability valid outputs until total baseline mass \(\delta\) is reached.

The symbolic thermodynamic intelligence of generator \(G\), at length \(n\), context \(y\), and rare-valid mass \(\delta\), is
\mltlne{
    \calI_{\delta,n}(G\mid y)
    =
    \frac{
    P_G(R_{\delta,n}(y)\mid y)
    -
    P_0(R_{\delta,n}(y)\mid y)
    }
    {
    P_0(R_{\delta,n}(y)\mid y)
    }
   \\ =
    \frac{P_G(R_{\delta,n}(y)\mid y)}{\delta}-1.
    \label{eq:sym_I}
}
Equivalently,
\cgather{
    P_G(R_{\delta,n}(y)\mid y)
    =
    \delta\left(1+\calI_{\delta,n}(G\mid y)\right).
    \label{eq:sym_mass_lift}
}
Thus \(\calI_{\delta,n}=0\) when the generator matches the baseline on the rare-valid set; positive values indicate amplification and negative values suppression. For a distribution \(\mu\) over prompts or tasks,
\cgather{
    \calI_{\delta,n}^{\mu}(G)
    =
    \Exp_{y\sim\mu}
    \left[
    \calI_{\delta,n}(G\mid y)
    \right].
    \label{eq:sym_population_I}
}
The symbolic definition inherits the binary coarse-graining bound from the path-space theory. Let
\cgather{
    p_G(y)
    =
    P_G(R_{\delta,n}(y)\mid y)
    =
    \delta\left(1+\calI_{\delta,n}(G\mid y)\right).
}
Then data processing for KL divergence under the binary partition \(\{R_{\delta,n}(y),R_{\delta,n}(y)^c\}\) gives
\cgather{
    D_{\mathrm{KL}}
    \left(
    P_G(\cdot\mid y)\,\|\,P_0(\cdot\mid y)
    \right)
    \geq
    d(p_G(y)\,\|\,\delta),
    \label{eq:sym_binary_kl}\\
    \text{ where } d(p\,\|\,\delta)
    =
    p\log\frac{p}{\delta}
    +
    (1-p)\log\frac{1-p}{1-\delta}\notag.
}
Equivalently,
\mltlne{
    D_{\mathrm{KL}}
    \left(
    P_G(\cdot\mid y)\,\|\,P_0(\cdot\mid y)
    \right)
    \\\geq
    d\left(
    \delta(1+\calI_{\delta,n}(G\mid y))
    \,\|\,\delta
    \right).
    \label{eq:sym_kl_I}
}
Thus symbolic rare-valid amplification requires measurable divergence from the baseline symbolic law.

\subsubsection{Validity-weighted operational estimator}

Empirical validity is often graded. Let \(v(x,y)\in[0,1]\) be a calibrated validity score and define
\cgather{
    w_{\delta,n}(x,y)
    =
    v(x,y)\,
    \mathbf{1}_{\{0<P_0(x\mid y)\leq q_{\delta}(y)\}},
}
where \(q_{\delta}(y)\) is chosen so that the baseline weighted mass is
\cgather{
    Z_{0,\delta}(y)
    =
    \sum_{x\in\Sigma^n}
    w_{\delta,n}(x,y)P_0(x\mid y)
    >
    0.
}
The validity-weighted symbolic intelligence is then
\cgather{
    \calI_{\delta,n}^{\mathrm{val}}(G\mid y)
    =
    \frac{
    \sum_{x\in\Sigma^n}
    w_{\delta,n}(x,y)
    \left(P_G(x\mid y)-P_0(x\mid y)\right)
    }
    {Z_{0,\delta}(y)}.
    \label{eq:sym_valid_weighted}
}
This reduces to the hard rare-valid definition when \(v\) is an indicator of \(R_{\delta,n}(y)\), and it penalizes entropy inflation because rare invalid outputs receive little or no weight.

For a concrete task family, let \(D_+\) contain valid prompt--output pairs and \(D_-\) invalid, corrupted, or adversarial outputs. Train a calibrated classifier \(c_\theta(x,y)\in[0,1]\) estimating validity. For threshold \(\tau\), define
\cgather{
    V_{n,\tau}(y)
    =
    \{x\in\Sigma^n:c_\theta(x,y)\geq \tau\}.
}
Given \(P_0(\cdot\mid y)\), choose a rare-tail threshold \(q_\delta(y)\) so that
\cgathers{
    R_{\delta,n,\tau}(y)
    =
    \{x\in\Sigma^n:
    c_\theta(x,y)\geq\tau,\;
    0<P_0(x\mid y)\leq q_\delta(y)
    \},
}
with
\cgather{
    P_0(R_{\delta,n,\tau}(y)\mid y)
    =
    \delta,
    \label{eq:sym_operational_mass}
}
up to the finite-alphabet boundary convention. The corresponding hard-threshold symbolic intelligence is
\cgather{
    \calI_{\delta,n,\tau}(G\mid y)
    =
    \frac{
    P_G(R_{\delta,n,\tau}(y)\mid y)
    }
    {\delta}
    -1.
    \label{eq:sym_operational_I}
}
Here \(c_\theta\) may combine grammaticality, entailment, and task relevance. For executable reasoning, validity can instead be defined by a parser, unit tests, proof checker, or verifier.

\subsubsection{Temperature as a negative control}
Let \(G_T\) denote an LLM decoded at temperature \(T\). The entropy of the generated symbolic distribution is
\cgather{
    H_n(G_T\mid y)
    =
    -\sum_{x\in\Sigma^n}
    P_{G_T}(x\mid y)
    \log P_{G_T}(x\mid y).
}
The validity-weighted rare-tail score is
\cgathers{
    \calI_{\delta,n}^{\mathrm{val}}(G_T\mid y)
    =
    \frac{
    \sum_{x\in\Sigma^n}
    w_{\delta,n}(x,y)
    \left(P_{G_T}(x\mid y)-P_0(x\mid y)\right)
    }
    {Z_{0,\delta}(y)}.
}
Entropy and rare-valid lift need not be monotone. Increasing temperature may raise \(H_n(G_T\mid y)\) while shifting mass into invalid or incoherent strings. Therefore
\cgather{
    \calI_{\delta,n}^{\mathrm{val}}(G_T\mid y)
    \not\equiv
    H_n(G_T\mid y).
}
The empirical prediction is an intermediate optimum: low temperature is valid but generic, high temperature is rare but often invalid, and the validity-weighted score peaks when outputs are both rare under \(P_0\) and valid.

\subsubsection{Set-level lift and symbolic scale calculations}

For the hard rare-valid set \(R_{\delta,n}(y)\), define the set-level log-lift
\cgather{
    \lambda_{\delta,n}(G\mid y)
    =
    \log
    \frac{
    P_G(R_{\delta,n}(y)\mid y)
    }
    {
    P_0(R_{\delta,n}(y)\mid y)
    }.
}
Since \(P_0(R_{\delta,n}(y)\mid y)=\delta\), this gives the exact identity
\cgather{
    \lambda_{\delta,n}(G\mid y)
    =
    \log\left(1+\calI_{\delta,n}(G\mid y)\right).
    \label{eq:set_lift_identity}
}
Equivalently,
\cgather{
    \calI_{\delta,n}(G\mid y)
    =
    \exp\left(\lambda_{\delta,n}(G\mid y)\right)-1.
    \label{eq:I_from_set_lift}
}
This set-level identity replaces an informal pointwise average log-lift. The latter,
\cgather{
    \Exp_{x\sim P_G(\cdot\mid R_{\delta,n}(y),y)}
    \left[
    \log
    \frac{P_G(x\mid y)}{P_0(x\mid y)}
    \right]
}
is generally unequal to \(\lambda_{\delta,n}(G\mid y)\), except when \(P_G(x\mid y)/P_0(x\mid y)\) is nearly constant over \(R_{\delta,n}(y)\). For sequence length \(n\), define the per-symbol structured log-lift in bits by
\cgather{
    \Delta\ell_{\delta,n}(G\mid y)
    =
    \frac{
    \lambda_{\delta,n}(G\mid y)
    }
    {n\log 2}.
    \label{eq:structured_log_lift}
}
Then
\cgather{
    \log_{10}\left(\calI_{\delta,n}(G\mid y)+1\right)
    =
    n\,\Delta\ell_{\delta,n}(G\mid y)\,\log_{10}2.
    \label{eq:symbolic_logI}
}
This is a structured log-lift decomposition, not an ordinary entropy-rate statement. The latter becomes relevant only after an additional finite-resolution AEP approximation has been specified.

\subsubsection{Sentence-scale human and AI estimates}
\label{sec:symbolic_human_ai_estimates}

We now instantiate the symbolic calculation at the sentence scale. Louwerse's simplified combinatorial estimate~\citep{louwerse2021sentences} gives a finite-resolution ensemble of interpretable English sentences of roughly \(3\)--\(20\) words on the order of
\cgather{
    N_V\triangleq |V_n|\approx 5\times10^{21}.
    \label{eq:valid_sentence_count}
}
This is not a universal linguistic constant; it is a usable baseline cardinality for the present finite-resolution calculation. To define a comparable rare-valid target, let \(G_n^\star\subset V_n\) denote sentence-scale strings that satisfy an additional high-quality human predicate: literary force, explanatory compression, originality, memorability, or comparable semantic/aesthetic force. We estimate its cardinality by taking a broad human canon of \(B_\star\sim2\times10^3\) high-quality long-form works and \(S_\star\sim5\times10^3\) sentence-scale units per work,
\cgather{
    N_G\triangleq |G_n^\star|
    \approx
    B_\star S_\star
    \sim
    (2\times10^3)(5\times10^3)
    =
    10^7.
    \label{eq:great_sentence_count}
}
The scale of this assumption is conservative relative to large public-domain corpora; the Standardized Project Gutenberg Corpus, for example, contains more than \(5\times10^4\) books and more than \(3\times10^9\) word tokens~\citep{gerlach2020standardized}. We treat \(N_G=10^6\)--\(10^8\) as a sensitivity range.

Taking the passive symbolic baseline to be approximately uniform over \(V_n\), the baseline mass of the exemplary human-quality set is
\cgather{
    \delta_\star
    =
    \frac{N_G}{N_V}
    \approx
    \frac{10^7}{5\times10^{21}}
    =
    2\times10^{-15}.
    \label{eq:human_delta_star}
}
If \(q_G=P_G(G_n^\star)\) is the probability that generator \(G\) lands in this target set under the specified task condition, then
\cgather{
    \calI_G+1
    =
    \frac{q_G}{\delta_\star}
    =
    q_G\,\frac{N_V}{N_G}.
    \label{eq:q_symbolic_lift}
}
For an expert human process conditioned on producing exemplary sentence-scale text, \(q_H\approx1\), giving
\cgather{
    \calI_H+1
    \approx
    \frac{5\times10^{21}}{10^7}
    =
    5\times10^{14}.
    \label{eq:human_symbolic_I}
}
Equivalently,
\cgather{
    L_H=\log_{10}(\calI_H+1)=14.699\\
    \Lambda_H
    =
    \log_{10}(L_H+1)
    =
    1.196.
    \label{eq:human_symbolic_lambda}
}

For the LLM estimate we use only one machine generator and one human reference corpus: Gutenberg prose and GPT-5 long-form prose. The entropy-rate estimates are generated by the self-contained procedure in Appendix~\ref{app:gutenberg_gpt5_entropy}. Both corpora are mapped to a common \(27\)-symbol alphabet, consisting of the letters \(a\)--\(z\) and space, with punctuation, digits, and non-ASCII characters removed. The resulting mean entropy rates are
\cgather{
    H_H=0.77\ \mathrm{bits/character}
    \quad\hbox{(for Gutenberg prose)},
    \\
    H_{\mathrm{GPT5}}=0.74\ \mathrm{bits/character}.
    \label{eq:gutenberg_gpt5_entropy_values}
}
Since Louwerse's valid-sentence estimate is a sentence-scale count, the entropy-rate correction must use a character-scale length. We take a 20-word sentence-scale unit to have \(n_\star=100\) characters after the same coarse-graining. The correction below is an AEP-style support-size approximation, not an exact finite-length theorem. More precisely, we write
\cgather{
    \log_2\frac{q_{\mathrm{GPT5}}}{q_H}
    =
    n_\star(H_{\mathrm{GPT5}}-H_H)+\rho_{n_\star},
    \label{eq:entropy_support_correction}
}
where \(\rho_{n_\star}\) collects finite-length typical-set error, boundary effects of the rare-valid set, and the fact that entropy-rate support size is only a proxy for overlap with the exemplary-output subset. The numerical value below sets \(\rho_{100}=0\), and should be read as a central order-of-magnitude estimate. With \(n_\star=100\), this gives
\cgather{
    \frac{q_{\mathrm{GPT5}}}{q_H}
    \approx
    2^{100(0.74-0.77)}
    =2^{-3}=0.125.
    \label{eq:gpt5_q_ratio_point}
}
Combining \eqref{eq:human_symbolic_I} and \eqref{eq:gpt5_q_ratio_point} gives the central estimate
\cgather{
    \calI_{\mathrm{GPT5}}+1
    \approx
    (5\times10^{14})
    2^{100(0.74-0.77)}
    =
    6.25\times10^{13}.
    \label{eq:gpt5_symbolic_I_entropy}
}
Thus
\cgather{
    L_{\mathrm{GPT5}}
    =
    \log_{10}(6.25\times10^{13})
    =13.796
    \\    \Lambda_{\mathrm{GPT5}}
    =
    \log_{10}(L_{\mathrm{GPT5}}+1)
    =1.170.
    \label{eq:gpt5_symbolic_estimate}
}
Since \(H_{\mathrm{GPT5}}<H_H\), the sentence-scale choice \(n_\star=100\) is conservative relative to longer independently composable symbolic units: increasing \(n_\star\) would decrease the GPT-5 estimate, provided the entropy-rate gap persists and the rare-valid construction is consistently extended.

\begin{table*}[!ht]
\caption{Numerical thermodynamic-intelligence scale sorted from small to large. The table reports the approximate rare-valid lift \(\calI\) and the stabilized double-log scale \(\Lambda=\log_{10}(\log_{10}(\calI+1)+1)\). Rows are ordered by the midpoint of the underlying \(L=\log_{10}(\calI+1)\) range when a range is reported. Symbolic entries are central finite-resolution estimates; the GPT-5 entry sets the finite-length AEP correction \(\rho_{100}=0\). Demon entries are local amplification scales before measurement, memory, control, and erasure costs.}
\label{tab:numerical_scale}
\footnotesize
\setlength{\tabcolsep}{3.5pt}
\renewcommand{\arraystretch}{1.08}
\begin{center}
\begin{tabular}{@{}llll@{}}
\toprule
\tc{0.25\textwidth}{\textbf{Example / regime}} & \tc{0.40\textwidth}{\textbf{Calculation basis}} & \tc{0.17\textwidth}{\(\boldsymbol{\calI}\)} & \tc{0.10\textwidth}{\(\boldsymbol{\Lambda}\)}\\
\midrule
\tc{0.25\textwidth}{Passive matter / passive gas} & \tc{0.40\textwidth}{Baseline dynamics, \(P=P_0\).} & \tc{0.17\textwidth}{\(0\)} & \tc{0.10\textwidth}{\(0\)}\\[3pt]
\tc{0.25\textwidth}{Narrow fixed-feedback controller} & \tc{0.40\textwidth}{Constant rare-valid lift, \(P(V_\delta)=\alpha\delta\), with \(\alpha=2\)--\(10^2\).} & \tc{0.17\textwidth}{\(1\)--\(99\)} & \tc{0.10\textwidth}{\(0.114\)--\(0.477\)}\\[3pt]
\tc{0.25\textwidth}{Repeated dynamic controller} & \tc{0.40\textwidth}{Sequential binary lift over \(7\)--\(10\) controlled stages, \(\calI+1\approx2^7\)--\(2^{10}\).} & \tc{0.17\textwidth}{\(1.27\times10^2\)--\(1.02\times10^3\)} & \tc{0.10\textwidth}{\(0.493\)--\(0.603\)}\\[3pt]
\tc{0.25\textwidth}{Sparse velocity-selection demon} & \tc{0.40\textwidth}{Maxwell--Boltzmann velocity selection with \(N=100\) particles and \(\epsilon=2\)--\(5\).} & \tc{0.17\textwidth}{\(7.0\times10^1\)--\(2.4\times10^9\)} & \tc{0.10\textwidth}{\(0.455\)--\(1.016\)}\\[3pt]
\tc{0.25\textwidth}{GPT-5 symbolic generation} & \tc{0.40\textwidth}{Central sentence-scale rare-valid lift with \(N_V=5\times10^{21}\), \(N_G=10^7\), \(n_\star=100\), \(H_H=0.77\), \(H_{\mathrm{GPT5}}=0.74\) bits/character, and \(\rho_{100}=0\).} & \tc{0.17\textwidth}{\(6.25\times10^{13}\)} & \tc{0.10\textwidth}{\(1.170\)}\\[3pt]
\tc{0.25\textwidth}{Expert human symbolic generation} & \tc{0.40\textwidth}{Same sentence-scale calculation with \(q_H\approx1\): \(\calI_H+1=N_V/N_G\).} & \tc{0.17\textwidth}{\(5.0\times10^{14}\)} & \tc{0.10\textwidth}{\(1.196\)}\\[3pt]
\tc{0.25\textwidth}{Maxwell demon with \(\Delta S=10^{-19}\,\mathrm{J/K}\)} & \tc{0.40\textwidth}{Fluctuation-theorem scale with \(\Delta S/k_B=7242.97\).} & \tc{0.17\textwidth}{\(\sim10^{3146}\)} & \tc{0.10\textwidth}{\(3.498\)}\\[3pt]
\tc{0.25\textwidth}{Velocity-selection demon in \(1\,\mathrm{mm}^3\) air} & \tc{0.40\textwidth}{Maxwell--Boltzmann velocity selection with \(N=2.44\times10^{16}\) and \(\epsilon=2\)--\(5\).} & \tc{0.17\textwidth}{\(10^{4.50\times10^{14}}\)--\(10^{2.29\times10^{15}}\)} & \tc{0.10\textwidth}{\(14.653\)--\(15.360\)}\\
\bottomrule
\end{tabular}
\end{center}
\end{table*}

\subsubsection{Algorithmic-complexity interpretation}

The symbolic construction has an algorithmic-statistics interpretation, but not a direct estimator. Kolmogorov complexity is uncomputable and machine-dependent up to additive constants~\citep{li2008kolmogorov}; the useful point is structural: algorithmic statistics separates the description of a model class from the index of an object inside it~\citep{gacs2001algorithmic}.

Let \(K(x)\) denote the prefix Kolmogorov complexity of a finite string \(x\). A two-part code describes \(x\) through a finite set or model class \(S\ni x\):
\cgather{
    K(x)\lesssim K(S)+\log |S|.
}
Here \(K(S)\) describes the regularity class and \(\log|S|\) indexes \(x\) within it. Thus high-complexity strings need not be noise; they may lie in rich valid classes. The target is not raw unpredictability, but probability mass assigned to rare strings that remain valid under task constraints.

Levin's coding theorem relates universal a priori probability \(m(x)\) to Kolmogorov complexity \citep{levin1974laws}:
\cgather{
    K(x)=-\log m(x)+O(1).
    \label{eq:levin_coding}
}
This gives an algorithmic analogue of symbolic rarity. If \(M\) is a universal semimeasure or computable approximation, normalize it over \(\Sigma^n\) by
\cgather{
    M_n(x)=\frac{M(x)}{\sum_{z\in\Sigma^n}M(z)}.
}
For an algorithmic rare-valid set \(A_{\delta,n}(y)\subseteq V_n(y)\) satisfying \(M_n(A_{\delta,n}(y))=\delta\), define
\cgather{
    \calI_{\delta,n}^{K}(G\mid y)
    =
    \frac{P_G(A_{\delta,n}(y)\mid y)}{\delta}-1.
    \label{eq:algorithmic_I}
}
Then the same binary KL bridge gives
\cgather{
    D_{\mathrm{KL}}\!\left(P_G(\cdot\mid y)\,\|\,M_n\right)
    \geq
    d\!\left(\delta(1+\calI_{\delta,n}^{K}(G\mid y))\,\|\,\delta\right).
    \label{eq:algorithmic_kl_bridge}
}
For a specified task distribution and baseline, larger symbolic thermodynamic intelligence means more probability mass on valid strings that are rare under that baseline. Empirical comparison therefore requires finite baselines, validity functions, and prompt distributions.

\section{Discussion}
\label{sec:discussion}
The central insight in this work is that perceived intelligence is measurable by what a system does to the probability distribution over possible futures. The architectural claim is that intelligence requires recursive self-simulation: a system models a world in which it is itself an acting component, evaluates action-conditioned futures, and uses that model to select interventions. The operational claim is that this architecture becomes measurable as rare-valid probability lift: amplification of futures that were unlikely under a passive baseline but remain valid under the constraints of the domain. The main mathematical claim is that these two ideas are not merely associated. Under bounded amplification, high rare-valid lift requires high rare-valid fidelity in the system's self-simulation, and high fidelity is nearly sufficient when an effective amplifying policy is available.

This perspective is adjacent to, but distinct from, several existing formalisms. Legg--Hutter intelligence measures expected reward over a universal distribution of computable environments, whereas the present quantity measures probability lift of a specified rare-valid path set relative to a specified passive law~\citep{legg2007universal}. Chollet's ARC framework emphasizes skill-acquisition efficiency and abstraction from sparse examples, whereas the present framework asks what path-law operation such success corresponds to once the baseline, validity criterion, and resolution are fixed~\citep{chollet2019measure}. Free-energy and active-inference formulations describe perception and action through variational free-energy minimization; here the measured object is not free energy itself but the induced reweighting of rare-valid trajectories~\citep{friston2010free}. The closest thermodynamic relative is semantic information, in which information is meaningful when it is causally necessary for a system to maintain viability under counterfactual interventions~\citep{kolchinsky2018semantic}. In contrast, rare-valid lift measures how much an induced law amplifies valid low-baseline-probability futures, and Theorems~\ref{thm:rv_fidelity_necessity} and~\ref{thm:rv_fidelity_sufficiency} connect that amplification to rare-valid simulation fidelity. Thus the present contribution is not rarity, validity, self-modeling, or feedback thermodynamics in isolation~\citep{sagawa2010generalized,parrondo2015thermodynamics}, but their combination into a level-relative path-measure definition with explicit fidelity bounds.

This formulation makes intelligence a level-relative measurement. A claim of intelligence is not made relative to an inaccessible absolute reality, but relative to a specified level of description, baseline path law, validity criterion, and observational resolution. At level \(k\), realized thermodynamic intelligence means that the system actually changes the level-\(k\) path law. Thermodynamic intelligence potential for level \(k\) is computed inside a level-\((k+1)\) simulation of level \(k\), where counterfactual actions can be evaluated even if they are not implemented at level \(k\). This distinction separates what a system can identify in simulation from what it actually makes more probable in the measured world.

The numerical examples calibrate the measure rather than define a taxonomy. Table~\ref{tab:numerical_scale} reports the stabilized double-log scale \(\Lambda=\log_{10}(\log_{10}(\calI+1)+1)\), which allows passive systems, feedback controllers, symbolic generators, and idealized information engines to be placed on the same probability-lift scale. Passive matter has zero lift by construction. Simple feedback produces modest lift; repeated dynamic control compounds small gains; symbolic generators can amplify valid low-baseline-probability sequences; and Maxwell-demon-like systems occupy the high end because microscopic information is used to select rare thermodynamic trajectories. The \(1\,\mathrm{mm}^3\) velocity-selection demon is large because the same selection rule is applied across \(10^{14}\)--\(10^{15}\) eligible particles. These are local amplification scales before full measurement, memory, control, and erasure costs are paid.

The formal results separate the ingredients of the theory. Theorems~\ref{thm:rv_fidelity_necessity} and~\ref{thm:rv_fidelity_sufficiency} provide the central bridge from recursive self-simulation to thermodynamic intelligence: rare-valid fidelity is necessary under bounded amplification and nearly sufficient with effective simulated actuation. Lemma~\ref{lem:rare_valid_binary_kl} shows that rare-valid amplification entails path-measure divergence from the passive baseline. Theorem~\ref{thm:path_deviation} and Proposition~\ref{prop:recursive_signature_convergence} relate path-law changes to coarse-grained thermodynamic signatures. Theorem~\ref{thm:conditional_imperfect_penalty} gives a protocol-dependent bookkeeping penalty for imperfect rare-set identification. Together, these results distinguish simulation fidelity, amplification power, thermodynamic accounting, and implementation.

Empirical use of \(\calI_\delta\) requires explicit choices of baseline, validity criterion, level of description, and trajectory resolution. These choices are not defects of the framework; they are the conditions under which the measurement is meaningful. Poor baselines can inflate or suppress measured lift, and continuous high-dimensional rare sets require statistical regularity conditions such as finite partitions, margin assumptions, or large-deviation structure. Natural testbeds include symbolic generation with executable or semantic checkers, closed-loop control with known passive dynamics, biological sequence evolution under viability constraints, and synthetic Maxwell-demon-like information engines with explicit measurement and erasure costs.

\section{Conclusion}
\label{sec:conclusion}

Intelligence can be treated as recursive self-simulation made observable through lawful amplification of rare-valid futures. A system is intelligent, in this sense, when it models a world containing itself, evaluates action-conditioned futures, and shifts probability mass toward futures that were rare under a specified passive baseline but remain valid. The central result is that high lift cannot be obtained from randomness or actuation alone: under bounded amplification it requires high rare-valid self-simulation fidelity, and with effective simulated actuation that fidelity yields lift near the actuation-limited optimum. The framework is level-relative, thermodynamically accounted, and applicable across passive systems, feedback controllers, Maxwell-demon-like information engines, and symbolic generators once the level, baseline law, validity criterion, trajectory resolution, and induced probability shift are specified.

\section*{Data and Code Availability}
The data, parameter files, and scripts used to generate the numerical calibration results reported in this manuscript are available through Harvard Dataverse at \url{https://doi.org/10.7910/DVN/F5TGT3}. The accompanying public GitHub repository, \url{https://github.com/zeroknowledgediscovery/tme}, contains the reproducibility code used to generate the values in Fig.~\ref{figtom}, Table~\ref{tab:demon_velocity_numbers}, Table~\ref{tab:numerical_scale}, and Appendix~\ref{app:numerical_scales}. The GPT--human entropy-rate values used as symbolic inputs are not re-estimated in the TME repository; they are imported as documented input constants from the workflow in \url{https://github.com/zeroknowledgediscovery/nero}, with provenance specified in the repository metadata and with the estimation protocol described in Appendix~\ref{app:gutenberg_gpt5_entropy}. 

\begin{acknowledgments}
  This work was supported by the Defense Advanced Research Projects Agency (DARPA) under the MAGICS program, DARPA-EA-25-02-05-MAGICS-PA-025, Award No. HR0011-26-3-E016. The views, opinions, and conclusions expressed in this work are those of the author and do not necessarily represent the official position or policy of DARPA or the U.S. Government.
\end{acknowledgments}

\clearpage


\appendix

\section{Proofs and Technical Qualifications}
\label{app:proofs}

\subsection{Proof of Theorem~\ref{thm:rv_fidelity_necessity}}

\noindent\emph{Proof.}
For readability, write \(\widehat P_0\), \(\widehat P_\pi\), \(\widehat V_\delta\), \(\widehat A\), \(\widehat\delta\), and \(\widehat\Phi\) for the corresponding level-\((k+1\to k)\) quantities. Decompose the simulated rare-valid mass as
\cgather{
    \widehat P_\pi(\widehat V_\delta)
    =
    \widehat P_\pi(\widehat V_\delta\cap\widehat A)
    +
    \widehat P_\pi(\widehat V_\delta\setminus\widehat A).
}
Since \(\widehat P_\pi\ll\widehat P_0\), the likelihood-ratio assumptions give
\cgather{
    \widehat P_\pi(\widehat V_\delta)
    \leq
    \alpha_{\max}\widehat P_0(\widehat V_\delta\cap\widehat A)
    +
    \widehat P_0(\widehat V_\delta\setminus\widehat A).
}
By definition,
\cgather{
    \widehat P_0(\widehat V_\delta\cap\widehat A)
    =
    \widehat\delta\widehat\Phi,
    \qquad
    \widehat P_0(\widehat V_\delta\setminus\widehat A)
    =
    \widehat\delta(1-\widehat\Phi).
}
Therefore
\cgather{
    \widehat P_\pi(\widehat V_\delta)
    \leq
    \widehat\delta
    \left[1+(\alpha_{\max}-1)\widehat\Phi\right].
}
Substituting into the definition of \(\widehat{\calI}_\delta^{(k+1\to k)}(\pi)\) gives
\cgather{
    \widehat{\calI}_\delta^{(k+1\to k)}(\pi)
    =
    \frac{\widehat P_\pi(\widehat V_\delta)-\widehat\delta}{\widehat\delta}
    \leq
    (\alpha_{\max}-1)\widehat\Phi.
}
If \(\alpha_{\max}>1\) and \(\widehat{\calI}_\delta^{(k+1\to k)}(\pi)\geq I_0>0\), then Eq.~\eqref{eq:rv_necessity_bound} implies
\cgather{
    I_0
    \leq
    (\alpha_{\max}-1)\widehat\Phi,
}
and rearrangement gives Eq.~\eqref{eq:rv_necessity_i0}. \(\square\)

\subsection{Proof of Theorem~\ref{thm:rv_fidelity_sufficiency}}

\noindent\emph{Proof.}
Use the same abbreviations as in the previous proof. The lower likelihood-ratio assumptions give
\cgather{
    \widehat P_\pi(\widehat V_\delta)
    \geq
    \alpha_{\min}\widehat P_0(\widehat V_\delta\cap\widehat A)
    +
    \beta_{\min}\widehat P_0(\widehat V_\delta\setminus\widehat A).
}
Using
\(\widehat P_0(\widehat V_\delta\cap\widehat A)=\widehat\delta\widehat\Phi\)
and
\(\widehat P_0(\widehat V_\delta\setminus\widehat A)=\widehat\delta(1-\widehat\Phi)\),
we obtain
\cgather{
    \widehat P_\pi(\widehat V_\delta)
    \geq
    \widehat\delta
    \left[
    \alpha_{\min}\widehat\Phi
    +
    \beta_{\min}(1-\widehat\Phi)
    \right].
}
Therefore
\cgather{
    \widehat{\calI}_\delta^{(k+1\to k)}(\pi)
    =
    \frac{\widehat P_\pi(\widehat V_\delta)-\widehat\delta}{\widehat\delta}
    \geq
    \alpha_{\min}\widehat\Phi
    +
    \beta_{\min}(1-\widehat\Phi)
    -1.
}
If \(\widehat\Phi\geq1-\varepsilon\), then the right-hand side is minimized over \(\widehat\Phi\in[1-\varepsilon,1]\) at \(\widehat\Phi=1-\varepsilon\) whenever \(\alpha_{\min}\geq\beta_{\min}\). This yields
\cgather{
    \widehat{\calI}_\delta^{(k+1\to k)}(\pi)
    \geq
    \alpha_{\min}(1-\varepsilon)+\beta_{\min}\varepsilon-1\\
    =
    (\alpha_{\min}-1)-(\alpha_{\min}-\beta_{\min})\varepsilon.
}
\(\square\)

\subsection{Proof of Theorem~\ref{thm:path_deviation}}
\begin{proof}
Let
\[
    a=P(A_s^+),\quad b=P(A_s^-),\quad
    a_0=Q(A_s^+),\quad b_0=Q(A_s^-).
\]
Then
\begin{align}
    |\Delta_s(P,Q)|
    &=
    \left|
    \log\frac{a}{b}-\log\frac{a_0}{b_0}
    \right|\\
    &\leq
    |\log a-\log a_0|
    +
    |\log b-\log b_0|.
\end{align}
Since all four probabilities are at least \(m_s\), the logarithm is \(1/m_s\)-Lipschitz on this interval, so
\begin{equation}
    |\Delta_s(P,Q)|
    \leq
    \frac{1}{m_s}\left(|a-a_0|+|b-b_0|\right).
\end{equation}
The two event probabilities are components of the coarse-grained distributions induced by the partition containing \(A_s^+\), \(A_s^-\), and the complement. Hence
\begin{equation}
    |a-a_0|+|b-b_0|
    \leq 2\,\TV(P,Q).
\end{equation}
By Pinsker's inequality,
\begin{equation}
    \TV(P,Q)
    \leq
    \sqrt{\frac{1}{2}\KL(P\,\|\,Q)}.
\end{equation}
Therefore,
\begin{equation}
    |\Delta_s(P,Q)|
    \leq
    \frac{\sqrt{2}}{m_s}
    \sqrt{\KL(P\,\|\,Q)}.
\end{equation}
Taking \(P=P_B\) and \(Q=P_0\) gives \eqref{eq:controlled_passive_deviation_bound}.
\end{proof}

\subsection{Proof of Proposition~\ref{prop:recursive_signature_convergence}}
\begin{proof}
By definition,
\caligns{
    \Delta_s(P_B^{(k)},P_0)
    -
    \Delta_s(P_B^\star,P_0)
    &=
    \log\frac{P_B^{(k)}(A_s^+)}{P_B^{(k)}(A_s^-)}
    -
    \log\frac{P_B^\star(A_s^+)}{P_B^\star(A_s^-)}\\
    &=
    \Delta_s(P_B^{(k)},P_B^\star).
}
Applying Theorem~\ref{thm:path_deviation} with \(P=P_B^{(k)}\) and \(Q=P_B^\star\) gives
\cgathers{
    \left|
    \Delta_s(P_B^{(k)},P_0)
    -
    \Delta_s(P_B^\star,P_0)
    \right|
    \leq
    \frac{\sqrt{2}}{m_s}
    \sqrt{\KL(P_B^{(k)}\,\|\,P_B^\star)}.}
Using \eqref{eq:model_to_control_stability} yields \eqref{eq:recursive_signature_convergence}.
\end{proof}

\subsection{Proof of Lemma~\ref{lem:rare_valid_binary_kl}}
\begin{proof}
Apply the data-processing inequality for KL divergence to the binary coarse-graining \(\{V_\delta,V_\delta^c\}\). The induced Bernoulli laws have success probabilities \(p=P(V_\delta)\) and \(\delta=P_0(V_\delta)\). Therefore
\[
    \KL(P\,\|\,P_0)
    \geq
    d(p\,\|\,\delta).
\]
Substituting \(p=\delta(1+\calI_\delta)\) gives \eqref{eq:Idelta_kl_bridge}.
\end{proof}

\subsection{Proof of Theorem~\ref{thm:conditional_imperfect_penalty}}
\begin{proof}
Because \(V_\delta\subseteq V'_\delta\) and \(\dd P/\dd P_0=\alpha\) on \(V'_\delta\), the true rare-valid mass is \(P(V_\delta)=\alpha\delta\). Hence \(\calI_\delta=(\alpha\delta-\delta)/\delta=\alpha-1\). The false-positive region \(E_\delta=V'_\delta\setminus V_\delta\) has baseline mass \(p_{\mathrm{err}}\), so its amplified controlled mass is \(P(E_\delta)=\alpha p_{\mathrm{err}}\). By Assumption~\ref{ass:error_resolution_protocol}, resolving this amplified erroneous mass costs \(k_Bc(p_{\mathrm{err}})\) per unit amplified false-positive mass. Therefore the expected overhead is \(\overline{\Delta S}_{\mathrm{err}}=\alpha k_B p_{\mathrm{err}}c(p_{\mathrm{err}})\).

The expected ideal bookkeeping contribution from the amplified true rare-valid mass is \(-\alpha\delta k_B\log\alpha\), giving \eqref{eq:imperfect_expected_general}. Dividing \eqref{eq:imperfect_expected_general} by \(P(V_\delta)=\alpha\delta\) gives the local per-amplified-true-rare-valid expression \eqref{eq:imperfect_local_general}. The Landauer forms follow by setting \(c(p)=\log(1/p)\), and the substitution \(\alpha=\calI_\delta+1\) follows from the conservative-identification identity above.
\end{proof}

\subsection{Proof of Corollary~\ref{cor:conditional_entropy_efficiency}}
\begin{proof}
For \(0<p<e^{-1}\), the function
$    g(p)=p\log\frac{1}{p}$ is increasing. Hypothesis~\ref{hyp:rare_set_stability} gives
\(p_{\mathrm{err}}^{(k)}\leq \rho_\delta(\varepsilon_k)\). Therefore,
\[
    p_{\mathrm{err}}^{(k)}
    \log\frac{1}{p_{\mathrm{err}}^{(k)}}
    \leq
    \rho_\delta(\varepsilon_k)
    \log\frac{1}{\rho_\delta(\varepsilon_k)}.
\]
Multiplying by \((\calI_\delta+1)k_B\) gives
\eqref{eq:DeltaSerr_k_bound}. Multiplying by \(k_B/\delta\) gives
\eqref{eq:Delta_s_err_k_bound}. Since
\(\rho_\delta(\varepsilon)\to0\) and \(p\log(1/p)\to0\) as
\(p\to0^+\), both excess bookkeeping penalties vanish as
\(\varepsilon_k\to0\). Hence \(S_k^{\mathrm{loc}}\) approaches the ideal
local bookkeeping value from above.
\end{proof}

\subsection{Additional protocol and rare-set qualifications}

The main text uses the conservative-identification case \(V_\delta\subseteq V'_\delta\) to isolate false-positive overhead and preserve the identity \(\alpha=\calI_\delta+1\). If false negatives are allowed, define
\cgather{
    t=P_0(V_\delta\cap V'_\delta),\qquad
    f=P_0(V'_\delta\setminus V_\delta),
    \qquad
    m=\delta-t.
}
If \(\dd P/\dd P_0=\alpha\) on \(V'_\delta\) and \(\dd P/\dd P_0=\beta\) outside \(V'_\delta\), with
\cgather{
    \beta=\frac{1-\alpha(t+f)}{1-t-f},
    \qquad
    P(V_\delta)=\alpha t+\beta(\delta-t),
}
then
\cgather{
    \calI_\delta=\frac{\alpha t+\beta(\delta-t)-\delta}{\delta}.
}
Thus \(\alpha=\calI_\delta+1\) is specific to the no-false-negative case.

The baseline-surprisal Landauer form in Eq.~\eqref{eq:DeltaSerr_baseline_landauer} charges false-positive assignments according to their rarity under \(P_0\). A controlled-distribution erasure protocol instead uses \(P(E_\delta)=\alpha p_{\mathrm{err}}\), giving, when \(\alpha p_{\mathrm{err}}<1\),
\[
    \alpha k_Bp_{\mathrm{err}}
    \log\frac{1}{\alpha p_{\mathrm{err}}}
    =
    \alpha k_Bp_{\mathrm{err}}
    \left(\log\frac{1}{p_{\mathrm{err}}}-\log\alpha\right).
\]
The protocol-dependent form in the main text keeps this choice explicit. Similarly, the path-deviation bound in Theorem~\ref{thm:path_deviation} is a moderate-event stability bound because of the factor \(1/m_s\); rare-valid amplification is handled separately through Lemma~\ref{lem:rare_valid_binary_kl}.

\subsection{A sufficient condition for rare-set stability}
\label{app:rare_set_stability}

Hypothesis~\ref{hyp:rare_set_stability} is a statistical regularity condition, not a thermodynamic law. Here we record a concrete finite-resolution setting in which it holds. Fix an observational partition \(\Pi_\eta\) and a rare-tail region \(R_{\delta,\eta}\), chosen under the passive law \(P_0\). Let \(s:\Pi_\eta\to\mathbb R\) be a cell-level validity score and let \(\tau\) be a validity threshold, so that the true rare-valid set at this resolution is
\cgather{
    V_{\delta,\eta}
    =
    \bigcup_{C\in\Pi_\eta:\, C\subseteq R_{\delta,\eta},\, s(C)\geq \tau} C .
}
Let \(s_k:\Pi_\eta\to\mathbb R\) be the validity score induced by the \(k\)-level recursive model, and define the inferred set
\cgather{
    V_{\delta,\eta}^{(k)}
    =
    \bigcup_{C\in\Pi_\eta:\, C\subseteq R_{\delta,\eta},\, s_k(C)\geq \tau} C .
}
Assume that the score error is uniformly controlled by the intervention-relevant model error:
\cgather{
    \sup_{C\subseteq R_{\delta,\eta}}
    |s_k(C)-s(C)|
    \leq c\,\varepsilon_k
    \label{eq:score_uniform_control}
}
for some constant \(c>0\). Also assume a boundary-margin condition: there is a modulus \(m_\delta(t)\), with \(m_\delta(t)\to0\) as \(t\to0\), such that
\cgather{
    P_0\!\left(
    \bigcup_{C\subseteq R_{\delta,\eta}:\, 0\leq \tau-s(C)\leq t} C
    \right)
    \leq m_\delta(t).
    \label{eq:rare_boundary_margin}
}
Then
\cgather{
    P_0\!\left(V_{\delta,\eta}^{(k)}\setminus V_{\delta,\eta}\right)
    \leq
    m_\delta(c\,\varepsilon_k).
    \label{eq:finite_partition_stability_bound}
}
Thus Hypothesis~\ref{hyp:rare_set_stability} holds with
\(\rho_\delta(\varepsilon)=m_\delta(c\varepsilon)\).

Indeed, if a cell \(C\subseteq R_{\delta,\eta}\) is a false positive, then \(s_k(C)\geq\tau\) but \(s(C)<\tau\). By Eq.~\eqref{eq:score_uniform_control},
\[
    \tau-s(C)
    \leq
    s_k(C)-s(C)
    \leq
    c\,\varepsilon_k .
\]
Hence every false-positive cell lies in the boundary band
\[
    \{C\subseteq R_{\delta,\eta}: 0\leq \tau-s(C)\leq c\,\varepsilon_k\},
\]
and Eq.~\eqref{eq:finite_partition_stability_bound} follows from the margin condition.

A particularly simple case occurs when the validity boundary has a positive finite-resolution margin: if there exists \(\gamma>0\) such that no cell in \(R_{\delta,\eta}\) satisfies \(0<|\tau-s(C)|\leq\gamma\), then \(m_\delta(t)=0\) for \(t<\gamma\). Consequently, whenever \(c\,\varepsilon_k<\gamma\), the false-positive mass is zero:
\[
    P_0\!\left(V_{\delta,\eta}^{(k)}\setminus V_{\delta,\eta}\right)=0.
\]
More generally, if the boundary band satisfies a polynomial margin bound \(m_\delta(t)\leq C_\delta t^a\) for constants \(C_\delta>0\) and \(a>0\), then
\[
    p_{\rm err}^{(k)}
    \leq
    C_\delta c^a \varepsilon_k^a .
\]
This gives an explicit modulus for Hypothesis~\ref{hyp:rare_set_stability}.

\section{Entropy-rate Estimation for text}
\label{app:gutenberg_gpt5_entropy}

This appendix gives the self-contained entropy-rate protocol used in Section~\ref{sec:symbolic_human_ai_estimates}. The goal is not to reproduce a full detector or model-comparison study, but only to obtain two character-level entropy-rate estimates on a common symbolic alphabet: a human prose reference and a GPT-5 long-form prose estimate.

All texts are converted to a fixed \(27\)-symbol alphabet,
\[
    \Sigma_{27}=\{a,b,\ldots,z,\text{space}\}.
\]
Text is lowercased; punctuation, digits, and non-ASCII characters are removed. The resulting character stream is treated as a finite sample path from an approximately stationary symbolic source. Entropy is reported in bits per character.

For a preprocessed sequence \(s_{1:N}\), we use a nonparametric probabilistic-finite-state-automaton entropy-rate estimator~\citep{CLx,CL12g}. For a substring-frequency threshold \(m\), substrings with fewer than \(m\) occurrences are excluded. Empirical next-symbol distributions are estimated from retained histories and histories inducing similar next-symbol laws are represented as states of an inferred finite-state source. If \(Q_m\) is the resulting state set, \(\widehat\pi_m(q)\) is the empirical state frequency, and \(\widehat p_m(a\mid q)\) is the empirical next-symbol law, the threshold-specific estimate is
\begin{equation}
    \widehat H^{(m)}
    =
    -\sum_{q\in Q_m}\widehat\pi_m(q)
      \sum_{a\in\Sigma_{27}}
      \widehat p_m(a\mid q)\log_2\widehat p_m(a\mid q).
\end{equation}
To reduce dependence on a single pruning threshold, the reported document-level estimate is the median across a fixed threshold grid,
\begin{equation}
    \widehat H
    =
    \mathrm{median}_{m\in\{m_1,\ldots,m_M\}}
    \widehat H^{(m)}.
\end{equation}
The estimator requires no language model, labels, or training corpus. The thresholding step removes poorly supported histories; the median aggregation stabilizes the estimate across admissible frequency cutoffs.

The human reference corpus consists of English Project Gutenberg long-form prose. Legal headers and boilerplate are removed, and texts shorter than \(150{,}000\) post-processed characters are excluded. This yields \(4341\) Gutenberg documents. The GPT-5 corpus consists of \(197\) long-form prose samples generated with a fixed narrative-prompt protocol using GPT-5 API access. Each sample is generated toward a target length of approximately \(150{,}000\) characters using repeated continuation calls under default sampling settings, with only maximum completion length controlled.

The cohort-level summary statistics used in the symbolic calculation are
\begin{center}
\scriptsize
\setlength{\tabcolsep}{2pt}
\begin{tabular}{@{}lcccc@{}}
\toprule
Source & mean \(\widehat H\) & median \(\widehat H\) & s.d. \(\widehat H\) & count\\
\midrule
GPT-5 long-form prose & 0.74 & 0.74 & 0.08 & 197\\
Project Gutenberg prose & 0.77 & 0.78 & 0.12 & 4341\\
\bottomrule
\end{tabular}
\end{center}
The main text uses the mean values \(H_{\mathrm{GPT5}}=0.74\) and \(H_H=0.77\) bits per character. The median values give the same qualitative ordering; using the Gutenberg median \(0.78\) instead would make the GPT-5 support correction smaller by an additional factor of \(2\) at \(n_\star=100\).

\section{Numerical Scale Calculations}
\label{app:numerical_scales}

This appendix supports the numerical values in Table~\ref{tab:numerical_scale}. The table does not report the raw lift \(\calI\), because the values span from \(0\) to powers such as \(10^{10^{15}}\). Instead it reports the compressed double-log scale
\begin{align}
    L &\triangleq  \log_{10}(\calI+1), \label{eq:appendix_L_def}\\
    \Lambda &\triangleq  \log_{10}(L+1). \label{eq:appendix_lambda_def}
\end{align}
The \(+1\) terms keep the scale finite at the passive baseline. For all nonzero large examples, \(\Lambda\) behaves as an ordinary \(\log\log\)-scale. When a row is a range, the endpoints of the displayed \(\Lambda\)-range are obtained by applying Eq.~\eqref{eq:appendix_lambda_def} to the endpoint values of \(L\). The ordering in Table~\ref{tab:numerical_scale} uses the midpoint of the underlying \(L\)-range.

\subsection{Passive baseline}

For a passive system, \(P=P_0\), so \(\calI=0\). Therefore
\cgather{
    L = \log_{10}(1)=0,  \text{ and }
    \Lambda = \log_{10}(1)=0.
}

\subsection{Fixed-feedback amplification}

Suppose a controller amplifies a rare-valid set by a constant likelihood factor \(\alpha\). Then
\begin{align}
    P(V_\delta) &= \alpha\delta,\\
    \calI_\delta
        &=\frac{\alpha\delta-\delta}{\delta}
          =\alpha-1,\\
    L &=\log_{10}(\calI_\delta+1)
       =\log_{10}\alpha .
    \label{eq:appendix_constant_lift}
\end{align}
For the fixed-feedback row we use \(\alpha=2\) to \(10^2\). Thus
\begin{align}
    \alpha=2:\quad
    &L=0.301,\quad \Lambda=0.114,\\
    \alpha=10^2:\quad
    &L=2.000,\quad \Lambda=0.477.
\end{align}
This gives the table entry \(\Lambda=0.114\)--\(0.477\).

\subsection{Repeated dynamic control}

Repeated control compounds set-level lift. If \(m\) stages have approximate lift factors \(\alpha_1,\ldots,\alpha_m\), then
\begin{align}
    \calI+1 &\approx \prod_{j=1}^{m}\alpha_j,\\
    L &\approx \sum_{j=1}^{m}\log_{10}\alpha_j .
    \label{eq:appendix_sequential_lift}
\end{align}
The repeated-control row uses seven to ten binary improvements:
\begin{equation}
    \calI+1\approx 2^7\text{--}2^{10}.
\end{equation}
Hence
\begin{align}
    m=7:\quad
    &L=7\log_{10}2=2.107,\quad \Lambda=0.493,\\
    m=10:\quad
    &L=10\log_{10}2=3.010,\quad \Lambda=0.603.
\end{align}
This gives \(\Lambda=0.493\)--\(0.603\).

\subsection{Sentence-scale symbolic lift}

For symbolic sequences, Eq.~\eqref{eq:symbolic_logI} gives
\begin{equation}
    L=n\,\Delta\ell\,\log_{10}2,
    \label{eq:appendix_symbolic_lift}
\end{equation}
where \(n\) is sequence length and \(\Delta\ell\) is the per-symbol structured set-level lift in bits. The sentence-scale calculation in Section~\ref{sec:symbolic_human_ai_estimates} instead begins from cardinalities.

Louwerse's finite-resolution combinatorial estimate~\citep{louwerse2021sentences} gives
\begin{equation}
    N_V\approx 5\times10^{21}
\end{equation}
valid or interpretable English sentence-scale strings. The exemplary human-quality target set is estimated as
\begin{equation}
    N_G
    \approx
    B_\star S_\star
    \sim
    (2\times10^3)(5\times10^3)
    =
    10^7.
\end{equation}
Thus
\begin{align}
    \delta_\star
    &=
    \frac{N_G}{N_V}
    =
    2\times10^{-15},\\
    \calI_H+1
    &\approx
    \frac{N_V}{N_G}
    =
    5\times10^{14}.
\end{align}
Therefore
\begin{align}
    L_H
    &=
    \log_{10}(5\times10^{14})
    =
    14.699,\\
    \Lambda_H
    &=
    \log_{10}(14.699+1)
    =
    1.196.
\end{align}
Using \(N_G=10^6\)--\(10^8\) gives \(\calI_H+1=5\times10^{15}\)--\(5\times10^{13}\) and \(\Lambda_H=1.223\)--\(1.167\).

For the entropy-rate-corrected GPT-5 calculation, we use an AEP-style support-size approximation with an explicit finite-length slack term,
\begin{equation}
    \log_2\frac{q_{\mathrm{GPT5}}}{q_H}
    =
    n_\star(H_{\mathrm{GPT5}}-H_H)+\rho_{n_\star}.
    \label{eq:appendix_entropy_ai_lift_slack}
\end{equation}
The term \(\rho_{n_\star}\) is not estimated here. It represents finite-length typical-set error, rare-valid boundary effects, and support-overlap error. This qualification matters because \(n_\star=100\) characters is sentence-scale, whereas AEP convergence is asymptotic. The point estimate in Table~\ref{tab:numerical_scale} sets \(\rho_{100}=0\):
\begin{align}
    2^{100(H_{\mathrm{GPT5}}-H_H)}
    &=
    2^{100(0.74-0.77)}
    =2^{-3}=0.125,\\
    \calI_{\mathrm{GPT5}}+1
    &\approx
    (5\times10^{14})2^{-3}
    =6.25\times10^{13},\\
    \Lambda_{\mathrm{GPT5}}
    &=
    \log_{10}\{\log_{10}(6.25\times10^{13})+1\}
    =1.170 .
\end{align}
Equivalently, the finite-length corrected expression is
\begin{equation}
    \calI_{\mathrm{GPT5}}+1
    \approx
    (5\times10^{14})2^{100(0.74-0.77)+\rho_{100}}.
\end{equation}
The compressed score \(\Lambda\) is relatively insensitive to moderate multiplicative changes, but the raw \(\calI\) should be interpreted only at order-of-magnitude precision. The entropy estimates are in bits per character under the \(27\)-symbol alphabet, so the sentence-scale length used here is \(n_\star=100\) characters, not \(20\) words.

\subsection{Fluctuation-theorem demon}

For an ideal demon that realizes an entropy-reducing trajectory of magnitude \(\Delta S\), the fluctuation-theorem scaling gives
\begin{equation}
    L=\log_{10}(\calI+1)
      =\frac{\Delta S/k_B}{\log 10}.
    \label{eq:appendix_fluctuation_demon}
\end{equation}
With \(\Delta S=10^{-19}\,\mathrm{J/K}\) and \(k_B=1.380649\times10^{-23}\,\mathrm{J/K}\),
\cgather{
    \frac{\Delta S}{k_B} = 7242.97,
    L = 3145.58, 
    \Lambda = \log_{10}(3146.58)=3.498.
}
The table rounds this to \(\Lambda=3.498\).

\subsection{Velocity-selection demon}

For the Maxwell--Boltzmann velocity-selection demon, Eq.~\eqref{eq:demon_logI} gives
\begin{equation}
    L\approx
    \frac{N}{2\log 10}\,
    q(\epsilon)\phi(\epsilon),
    \label{eq:appendix_velocity_demon}
\end{equation}
where \(N\) is the number of particles, \(q(\epsilon)\) is the fraction above the velocity threshold, and \(\phi(\epsilon)\) is the excess dimensionless kinetic energy above the thermal mean. The numerical inputs \(q(\epsilon)\), \(\phi(\epsilon)\), and \(L\) are listed in Table~\ref{tab:demon_velocity_numbers}.

For \(N=100\), the endpoint values over \(\epsilon=2\)--\(5\) are
\cgather{
    L=1.85\text{--}9.38, \text{ and }
    \Lambda
      =\log_{10}(L+1)
        =0.455\text{--}1.016.}
For \(1\,\mathrm{mm}^3\) of air at one atmosphere and \(300\,\mathrm{K}\), \(N\approx2.44\times10^{16}\). Table~\ref{tab:demon_velocity_numbers} gives
\begin{align}
    L&=4.50\times10^{14}
       \text{--}2.29\times10^{15},\\
    \Lambda&=14.653\text{--}15.360.
\end{align}
The \(1\,\mathrm{mm}^3\) demon therefore exceeds the single-event \(\Delta S=10^{-19}\,\mathrm{J/K}\) demon on this scale because it aggregates a very large number of microscopic velocity selections.

\end{document}